\providecommand{\U}[1]{\protect\rule{.1in}{.1in}}
\newtheorem{thm}{\bf Theorem}      
\newtheorem{cor}{\bf Corollary}[section]     
\newtheorem{lem}{\bf Lemma}
\newtheorem{prop}{\bf Proposition}
\newenvironment{proof}[1][Proof]{\noindent\textbf{#1.} }{\ \rule{0.5em}{0.5em}}
\normalsize\setlength{\parskip}{1em}
\titlespacing*{\section} {0pt}{9pt}{0pt}
\numberwithin{equation}{section}
\begin{document}
\normalsize
\title{Parallel Algorithms for Structured Sparse Support Vector Machines: Application in Music Genre Classification}

\author{Rongmei Liang\thanks{Department of Statistics and Data Science, Southern University of Science and Technology, Guangdong, 518055, P.R. China. Email: liang\_r\_m@163.com}, \\
Zizheng Liu\thanks{Liu and Liang contributed equally to this paper. Faculty of Education, Languages, Psychology \& Music, SEGi University, Kuala Lumpur, 47810, Malaysia. Email: liuzizheng199608@163.com},\ \ 
Xiaofei Wu\thanks{College of Mathematics and Statistics, Chongqing University, Chongqing 401331, P.R. China. E-mail: xfwu1016@163.com}, \\  
Jingwen Tu\thanks{Corresponding author. School of Mathematical and Physical Sciences, Chongqing University of Science and Technology, Chongqing 401331, P.R. China. Email: 15310293521@163.com}}
\date{}
\maketitle
\vspace{-.40in}

\begin{abstract}
Mathematical modelling, particularly through approaches such as structured sparse support vector machines (SS-SVM), plays a crucial role in processing data with complex feature structures, yet efficient algorithms for distributed large-scale data remain lacking. To address this gap, this paper proposes a unified optimization framework based on a consensus structure. This framework is not only applicable to various loss functions and combined regularization terms but can also be effectively extended to non-convex regularizers, demonstrating strong scalability. Building upon this framework, we develop a distributed parallel alternating direction method of multipliers (ADMM) algorithm to efficiently solve SS-SVMs under distributed data storage. To ensure convergence, we incorporate a Gaussian back-substitution technique. Additionally, for completeness, we introduce a family of sparse group Lasso support vector machine (SGL-SVM) and apply it to music information retrieval. Theoretical analysis confirms that the computational complexity of the proposed algorithm is independent of the choice of regularization terms and loss functions, underscoring the universality of the parallel approach. Experiments on both synthetic and real-world music archive datasets validate the reliability, stability, and efficiency of our algorithm. 
\end{abstract}

\textbf{Keywords:} Big data analytics, Music genre classification, Parallel computing, Support vector machine.

\section{Introduction}\label{sec1}
\quad \  Support vector machine (SVM \cite{Vapnik1995TheNO}) stands as a prominent supervised machine learning technique, which is widely employed for classification and regression tasks \cite{Wu2025MultiLA,Wu2025ParallelAA}.
In modern data science, high-dimensional classification problems are ubiquitous, where the number of feature dimensions often far exceeds the number of samples. To build accurate and interpretable models in such scenarios, sparse regularized SVM such as $\ell_1$-norm SVM \cite{Zhu20031SV}, the smoothly clipped absolute deviation (SCAD) penalized SVM \cite{Becker2011ElasticSA} and minimax concave penalty (MCP) penalized SVM \cite{Zhang2016VariableSF} are widely considered. However, single sparse regularization terms fail to effectively utilize the prior information of the data, resulting in sub-optimal performance when dealing with data featuring complex structures.

To address these limitations, many papers aim to structured sparse regularization terms with SVM, which include the elastic net SVM (EN-SVM, \cite{Wang2006TheDR}), sparse fused Lasso SVM (SFL-SVM, \cite{Tibshirani2005SparsityAS}). The aforementioned structured sparse SVM (SS-SVM) models can fully utilize available prior information to enhance model performance. For example, 
EN-SVM can utilize the correlation information among features to retain or eliminate the coefficients of relevant features simultaneously. When simply applying $\ell_1$ regularization, due to the correlation between features, one feature may be randomly selected, whereas the elastic net \cite{Zou2005RegularizationAV} avoids this issue.
SFL-SVM is primarily designed for handling data with sequential structures. It can make use of this sequential characteristic to ensure that the coefficients of features at adjacent time points are as close as possible, thereby effectively capturing the changing trends and structural information in the data.
When we have prior knowledge about the grouping of data, the sparse group Lasso \cite{Simon2013ASL} can utilize this information to conduct feature selection on a feature-group basis. It can not only perform sparse processing on the features within a group, setting the coefficients of unimportant features within the group to zero, but also select groups, setting the coefficients of an entire group to zero, thus improving the interpretability and performance of the model. 
These characteristics are particularly important for high-dimensional classification problems. This paper proposes a family of sparse group Lasso SVM (SGL-SVM) by incorporating a sparse group Lasso regularization term with various loss functions as discussed by \cite{Liang2024LinearizedAD}, which can serve as a significant complementary instance within the realm of SS-SVMs.

The field of music information retrieval (MIR) aptly demonstrates the significance and challenges associated with SS-SVM in practical applications \cite{Chen2025Investigating}. Automated music genre classification represents not only a canonical machine-learning task but also a means to understand the intricate internal composition of musical styles. When representing music data, we encounter multiple distinct characteristics that necessitate specialized modeling approaches.
First, music features often exhibit correlations. For instance, the timbre of an instrument can be related to the harmony it produces. A particular type of timbre might be more commonly associated with specific harmonic progressions. In such instances, we need a model that can not only select relevant features but also account for these correlations. The elastic net-like mechanism in SS-SVMs helps to jointly shrink correlated features, producing a more stable and interpretable model.
Second, music data has an intrinsic sequential nature. Musical compositions unfold over time, and the order of events is of great importance. The sparse fused lasso can handle the sequential structure in data, encouraging neighboring features to have similar values. In the context of music, it can capture the smooth transitions between different musical phrases or segments, and help the model identify and utilize these sequential relationships for more accurate genre classification.
Finally, music data inherently possesses a group-based structure. By extracting a multitude of features that describe rhythm, harmony, timbre, and dynamics from audio, we can group these features into different categories. For example, all rhythm-related features form one group, while harmony-related features form another. To generate interpretable conclusions, we require a model capable of feature selection at the group level.

With the rapid growth of data scale, a single machine is generally unable to store all the data. As a result, data often has to be stored in a distributed manner. However, when dealing with data stored in a distributed system, efficiently and stably solving optimization problems involving complex non-smooth regularization terms has become a key computational bottleneck \cite{Wu2025ParallelAA, Wu2025AUC}. Existing general solvers frequently struggle to meet the requirements of large-scale, high-dimensional, structured sparsity, and distributed computing. Therefore, SS-SVM, which integrates the capabilities of sparse group Lasso, elastic net, and sparse fused lasso, is of crucial importance in the field of music genre classification. It can effectively address the group structure, feature correlations, and sequential order in music data, enabling more precise, interpretable, and efficient classification of musical genres. Although in the scenario of non-distributed data storage, there already exist multiple optimization algorithms for SVMs with specific regularization terms \cite{Liang2024LinearizedAD, Guan2020AnEP}, a general framework that can uniformly handle multiple sparse patterns and their non-convex variants, while having strict theoretical guarantees and efficient parallel capabilities, is still lacking. The absence of such a framework forces researchers and application developers to customize different solution schemes for different models, and they also face difficulties in efficiency and scalability when dealing with large-scale data. 

To fill this gap, this paper proposes a novel unified optimization framework and its corresponding parallel and distributed algorithm. Our framework can elegantly encapsulate up to \textbf{54} SS-SVM models, including the EN-SVM, SFL-SVM, SGL-SVM, and their non-convex generalizations. We have designed an efficient solution algorithm for this framework, which natively supports distributed data storage and computation and is highly suitable for handling large-scale datasets. More importantly, we provide a complete convergence analysis for this algorithm and prove that it achieves an improved sublinear convergence rate under mild conditions, which provides a solid mathematical foundation for the reliability and efficiency of model solving. To empirically validate the superior performance and practical utility of the proposed framework and algorithm, we apply them to interpretable music genre analysis. The availability of large-scale music datasets such as FMA (free music archive) enables us to fully demonstrate the parallel computing advantages of the algorithm in a distributed environment. And the analysis results can be directly translated into quantifiable musicological insights, meeting the interpretability requirements of music education, musicological research, and music recommendation systems.

The structure of the remaining part of this paper is as follows: Section \ref{sec2} reviews the relevant work; Section \ref{sec3} elaborates on the unified optimization framework and parallel solution algorithm we proposed and presents the theoretical analysis of the algorithm's convergence; Section \ref{sec4} demonstrates how to apply the framework to music genre analysis and shows the corresponding experimental results, including the comparison of algorithm performance and the musicological findings extracted from the model; finally, Section \ref{sec5} concludes the paper and provides an outlook for future work. 
\section{Preliminaries}\label{sec2}
The purpose of this section is to sort out the research progress in three fields closely related to this paper, laying a foundation for the subsequent proposal of new methods and applications. First, we review a series of models that combine support vector machines with structured sparse regularization terms, especially the elastic net, sparse fused Lasso and sparse group Lasso. These models form the core components of the unified framework in this paper. Second, we introduce the alternating direction method of multipliers (ADMM) widely used in distributed optimization and its consensus form, which serves as the cornerstone for designing the parallel solution algorithm in this paper. Finally, we focus on the field of music information retrieval and systematically summarize the research context of the genre classification task. 
\subsection{Structured sparse regularization terms}\label{sec2.1}
In this section, we will systematically introduce the combination of the elastic net, sparse fused Lasso, sparse group Lasso and their non-convex variants with the SVM model.
\subsubsection{Convex regularization model}\label{sec2.1.1}
 Given a data set $\{\bm{X}, \bm{y}\} = \{(\bm{x}_i, y_i)\}_{i=1}^n$, where $\bm{x}_i \in \mathbb{R}^p$ and $y_i \in \{-1,+1\}$, the traditional SVM can be expressed as:
\begin{equation}
    \label{traditional-SVM}
    \mathop {\min }\limits_{\bm{\beta}, \beta_0} \frac{1}{n}  \sum_{i=1}^{n}[1-y_i(\bm{x}_i^T \bm{\beta}  + \beta_0)]_{+} + \frac{\lambda}{2}\|\bm{\beta}\|_2^2, 
\end{equation}
where $\lambda$ is tuning parameter, the $(\bm{\beta}, \beta_0)$ pair is the decision variable with $\bm{\beta} \in \mathbb{R}^p$ and $\beta_0 \in \mathbb{R}$, $[1-y_i(\bm{x}_i^T \beta_j  + \beta_0)]_{+} = \max\{1-y_i(\bm{x}_i^T \beta_j  + \beta_0), 0\}$ is the hinge loss function.
SVM improves generalization performance by maximizing the classification margin. However, their decision functions rely on all features and lack the ability for feature selection. The introduction of $\ell_1$ regularization (Lasso, \cite{Tibshirani1996RegressionSA}) enables SVMs to generate sparse solutions, where the weights of some features are exactly zero, thus achieving embedded feature selection. Nevertheless, standard Lasso has limitations in terms of unstable selection when dealing with highly correlated features and fails to utilize the structured prior information among features. 

To address these issues, Wang et al. \cite{Wang2006TheDR}  proposed the EN-SVM, which combines $\ell_1$ and $\ell_2$ regularization.
\begin{equation}
    \label{en-SVM}
    \mathop {\min }\limits_{\bm{\beta}, \beta_0} \frac{1}{n}  \sum_{i=1}^{n}[1-y_i(\bm{x}_i^T \bm{\beta}  + \beta_0)]_{+} + \lambda_1\|\bm{\beta}\|_1 + \lambda_2\|\bm{\beta}\|_2^2, 
\end{equation}
where the $\ell_2$ term enhances the model's stability for multicollinear features, while the $\ell_1$ term maintains the feature selection ability. When dealing with high-dimensional and correlated features, the elastic-net shows better prediction performance and stability compared to the single Lasso. 

For audio features extracted frame by frame, when these features have a sequential structure, the total variation captures smoothness by fusing the weight differences between adjacent features:
\begin{align}
\label{TV1}
    \mathcal{\bm R}_{TV}(\bm \beta) = \lambda\sum_{j=2}^p|\beta_j - \beta_{j-1}|=\lambda\|\bm{F\beta}\|_1.
\end{align}
where $\bm{F}$ is a $(p-1) \times p$ matrix with all elements being 0, except for 1 on the diagonal and -1 on the superdiagonal.

Combining the total variation with $\ell_1$, namely the sparse fused Lasso, can achieve sparsification while ensuring the smoothness of the weights of adjacent features. This regularization term is particularly suitable for analyzing the patterns of music feature evolution over time, such as the smooth transition of melody contours or the gradual change of rhythm intensity. The SFL-SVM was initially introduced by Tibshirani et.al. \cite{Tibshirani2005SparsityAS} and is expressed as 
\begin{equation}
    \label{sfl-SVM}
    \mathop {\min }\limits_{\bm{\beta}, \beta_0} \frac{1}{n}  \sum_{i=1}^{n}[1-y_i(\bm{x}_i^T \bm{\beta} + \beta_0)]_{+} + \lambda_1\|\bm{\beta}\|_1 + \lambda_2\|\bm{F\beta}\|_1, 
\end{equation}

For features with a natural group structure, such as the rhythm group and harmony group in musical features, group Lasso achieves group-level feature selection by penalizing the $\ell_2$ norm of the weights of each feature group.
\begin{equation}
\label{GL}
\mathcal{\bm R}_{GL}(\bm \beta)=\lambda \sum_{m=1}^M\|\bm{\beta}_{(m)}\|_2 = \lambda \|\bm{\beta}\|_{2,1}
\end{equation}
This regularization term tends to either retain or eliminate an entire group of features simultaneously, making it particularly suitable for application scenarios where classification decisions need to be interpreted from macro-dimensions such as ``rhythmic complexity" and ``harmonic richness". 

Sparse group Lasso further combines the $\ell_1$ penalty and the group Lasso penalty to induce sparsity both between and within groups. To the best of our knowledge, the current research on SGL-SVM is limited to the two-step classification method proposed by Huo et al. \cite{Huo2020SGLAN}. To ensure the completeness of the work, we present the expression of this model as follows.
\begin{equation}
    \label{sgl-SVM}
    \mathop {\min }\limits_{\bm{\beta}, \beta_0} \frac{1}{n}  \sum_{i=1}^{n}[1-y_i(\bm{x}_i^T \bm{\beta}  + \beta_0)]_{+} + \lambda_1\|\bm{\beta}\|_1 + \lambda_2\|\bm{\beta}\|_{2,1}, 
\end{equation}

\subsubsection{Non-convex regularization generalization}\label{sec2.1.2}
Although convex regularization terms have good optimization properties, they may cause excessive shrinkage bias when estimating coefficients, which affects the accuracy of feature selection. Non-convex regularization terms alleviate this problem by reducing the penalty on large coefficients. Among them, SCAD \cite{Becker2011ElasticSA}  and MCP \cite{Zhang2016VariableSF} are the most classic ones. 

The MCP regularization is defined as follows:
\begin{equation}
    \label{mcp}
    \mathcal{\bm R}_{\lambda}(|\bm \beta_j|) = \begin{cases}
\lambda |\bm \beta_j|-\frac{\bm {\beta_j}^2}{2a}, & \text{if} \ \ |\bm \beta_j|  \le a\lambda\\[3mm]
\frac{a\lambda^2}{2}, & \text{if} \ \ |\bm \beta_j| > a\lambda
\end{cases}
\end{equation}
where $a>0$ is a constant. When \(|\bm \beta_j|\) is relatively small, MCP behaves like an $\ell_1$ penalty. As \(|\bm \beta_j|\) increases, the penalty gradually decreases to a constant, thus reducing the estimation bias. 

The SCAD regularization is defined as follows:
\begin{equation}
    \label{scad}
    \mathcal{\bm R}_{\lambda}(|\bm \beta_j|) = \begin{cases}
\lambda |\bm \beta_j|, & \text{if} \ \ |\bm \beta_j|  \le \lambda\\[3mm]
\frac{-\bm\beta_j^2+2a\lambda|\bm\beta_j|-\lambda^2}{2(a-1)}, & \text{if} \ \ \lambda < |\bm \beta_j|  \le a\lambda  \\[3mm]
\frac{(a+1)\lambda^2}{2}, & \text{if} \ \ |\bm \beta_j| > a\lambda 
\end{cases}
\end{equation}
where $a>2$ is a constant, and SCAD acts as an $\ell_1$ penalty when \(|\bm \beta_j| \leq \lambda\). When \(|\bm \beta_j|>a\lambda\), the penalty becomes a constant, with a smooth transition in between. 

By replacing the $\ell_1$ part in the convex regularization terms in Section \ref{sec2.1.1} with MCP or SCAD, the corresponding non-convex variant models can be obtained. These models theoretically possess the Oracle property \cite{Zhang2016VariableSF}, that is, when the sample size approaches infinity, they can select the true model with a probability of 1. However, the non-convexity makes the optimization problem more complex, and specially designed algorithms are needed to ensure convergence to a valid solution.

Most of the existing solution methods are designed for specific SVM models, and there is a lack of a general framework that can uniformly handle various convex/non-convex regularization terms. In addition, the demand for distributed parallel solution for large-scale data further increases the complexity of algorithm design. This paper aims to fill this gap by proposing a unified optimization framework and the corresponding parallel algorithm. 
\subsubsection{Overview of existing algorithms}\label{sec2.1.3}
For the optimization problem of specif SS-SVM, researchers have designed a variety of algorithms. The core difference lies in how to handle the smooth part of the loss function and the non-smooth part of the regularization term. According to different optimization principles, existing algorithms can be mainly divided into the following categories: coordinate descent-based methods, proximal gradient-based methods, and splitting methods based on the ADMM. In this subsection, relevant research will be sorted out according to this algorithm classification framework. 

$\bullet$ \textbf{Algorithms based on coordinate descent and its variants} approximate the optimal solution through iterative updates for each coordinate. They are widely popular due to their simplicity and high efficiency. For example, Yang and Zou \cite{Yang2013AnEA} proposed a generalized coordinate descent algorithm to solve a class of EN-SVM. These methods usually need to utilize the second-order information of the loss function or specific structures to improve efficiency. However, when dealing with regularization terms with adjacent coordinate coupling constraints such as fused Lasso, their update steps become complex, and parallelization also faces challenges. 

$\bullet$ \textbf{Algorithms based on the proximal gradient} decompose the problem into the gradient descent of the smooth part (the loss function) and the solution of the proximal operator of the non-smooth part (the regularization term). The form of the proximal operator determines the specific implementation and efficiency of the algorithm. For example, Xu et al. \cite{Xu2015ProximalGM} explored using the proximal gradient method to solve the huberized EN-SVM, while Zhu et al. \cite{Zhu2020SupportVM} applied it to solve the huberized Pinball EN-SVM. For the $\ell_1$ norm and the elastic net, their proximal operators have closed-form solutions. For the group Lasso, its proximal operator is block soft thresholding. However, for the fused Lasso, its proximal operator requires a more complex dynamic programming algorithm to solve. The convergence speed of such algorithms is usually sub-linear.

$\bullet$ \textbf{Algorithms based on ADMM}, decompose complex coupled optimization problems into a series of simpler sub-problems that can be solved in parallel or sequentially by introducing auxiliary variables. This method shows unique advantages when dealing with composite regularization terms or non-separable constraints. For example, Liang et al. \cite{Liang2024LinearizedAD} introduced the linearized ADMM to solve the EN-SVM. For fusion-type problems, Ye and Xie \cite{Ye2011SplitBM} designed an iterative algorithm based on the split Bregman method to solve large-scale generalized fused Lasso problems. Recently, Wu et al. \cite{Wu2025MultiLA,Wu2024MultiAD}  proposed a unified multi-block linearized ADMM framework for solving the quantile sparse fused Lasso and the pinball SFL-SVM. ADMM has good scalability and parallel potential, and our work is also carried out based on it.

In addition, there are also path algorithms such as the least angle regression (LAR) and quadratic program. For example, while proposing the EN-SVM, Wang et al. \cite{Wang2006TheDR} presented an LAR-based algorithm to calculate its complete regularization path; and Tibshirani et al.  \cite{Tibshirani2005SparsityAS} incorporated quadratic programming techniques for SFL-SVM. These methods can provide a global perspective for model selection, but they incur high computational costs in ultra-high dimensional or big-data scenarios. 

The existing work mainly has four limitations:
Firstly, there is a lack of algorithms for solving the SGL-SVM model. As far as we know, there is currently no algorithm specifically designed for solving the SGL-SVM model.
Secondly, the algorithms are overly specific. Most algorithms are tailored for specific regularization terms, lacking a general solution framework that can uniformly handle the elastic-net, the sparse group Lasso, the sparse fused Lasso, and their non-convex generalizations.
Thirdly, there is insufficient support for parallel computing. Many algorithms are inherently serial. Even if some have parallel implementations, they do not fully take into account the modern computing environment with distributed data storage.
Fourthly, the theoretical analysis is not unified. Especially for support vector machines with non-convex regularization terms, the analysis of the convergence and convergence rate of the algorithms often needs to be carried out separately for each variant, lacking a general theoretical tool. 

Table \ref{tab11} summarizes and compares the characteristics of the above-mentioned main algorithms and their support for various regularization models from the dimension of algorithm principles.
\begin{table}[h]
    \centering
    \resizebox{1\columnwidth}{!}{
    \begin{threeparttable}
    \caption{Comparison of main solving algorithms for SS-SVM}
    \label{tab11}
    \begin{tabular}{lccc}
    \hline
        Algorithm category & Supported regularization models & Parallelism potential & Difficulty of non-convex extension \\ 
    \hline
        coordinate descent & EN-SVM & ** & ** \\
        proximal gradient  & EN-SVM & *** & ** \\
        ADMM               & EN-SVM, SFL-SVM & **** & **** \\
        LAR                & EN-SVM & * & * \\
        quadratic program  & SFL-SVM & * & * \\
    \hline
    \end{tabular}
        \begin{tablenotes}
            \footnotesize
            \item[* indicates low, ** indicates medium, *** indicates good, and **** indicates excellent] 
        \end{tablenotes}
    \end{threeparttable}}
\end{table}
As shown in Table \ref{tab11}, ADMM-type algorithms have significant advantages in model generality and parallelization potential, making them an ideal foundation for constructing a unified solving framework. However, successfully applying them to the sparse SVM family that encompasses various non-convex regularization terms and establishing a solid convergence theory remains an open challenge. The work in this paper is precisely a deep extension and innovation based on the ADMM framework, aiming to provide a unified solution that can seamlessly cover the various models listed in the table, support efficient distributed computing, and have strict theoretical guarantees. 
\subsection{Consensus optimization framework based on ADMM}\label{sec2.2}
The ADMM has become an important tool for distributed optimization due to its natural adaptability to separable problems and excellent convergence properties. In the scenario of parallel data storage, the consensus problem form of ADMM proposed by Boyd et al. \cite{Boyd2010DistributedOA} provides an efficient solution paradigm.

Consider a typical distributed learning problem: assume that the data is distributed across $K$ computing nodes, each node holds a local dataset $\bm D_k, k=1,2, \cdots, K$. The global objective is to minimize the sum of the loss functions of all nodes:
\begin{equation}
    \min_{\beta_0,\bm \beta} \sum_{k = 1}^{K} \mathcal{\bm L}_k(\beta_0,\bm \beta)+\mathcal{\bm R}(\bm \beta)
\end{equation}
where $\mathcal{\bm L}_k$ is the loss function on node $k$ with data $\bm D_k$, and $\mathcal{\bm R}$ is the regularization term. By introducing local variable copies $\bm \beta_k$ and the global consensus variable $\bm z$, the original problem can be transformed into an equivalent consensus form:
\begin{equation}
    \label{comsensus}
    \begin{aligned}
    \min_{\{\bm \beta_k\}, \bm z}&\sum_{k = 1}^{K} \mathcal{\bm L}_k(\beta_0,\bm \beta_k)+\mathcal{\bm R}(\bm z)\\
    \text{s.t.}&\quad \bm \beta_k = \bm z, \ k = 1, 2, \cdots, K
    \end{aligned}
\end{equation}
where the regularization term $\mathcal{\bm R}(\bm z)$ is usually handled by the central node. 

The corresponding augmented Lagrangian function is:
\[
L_{\mu}(\beta_0,\{\bm \beta_k\}, \bm z, \{\bm b_k\}) = \sum_{k = 1}^{K} \mathcal{\bm L}_k(\beta_0,\bm \beta_k) + \mathcal{\bm R}(\bm z) + \bm b_k^{\top}(\bm \beta_k - \bm z) + \frac{\mu}{2} \|\bm \beta_k - \bm z\|^2
\]
where $\bm b_k$ represents the dual variable associated with the constraint $\bm \beta_k = \bm z$ for each local node $k$, and $\mu > 0$ is  the penalty parameter. 

The standard consensus ADMM updates iteratively according to the following steps:

$\bullet$ \textbf{Local node:} Each node $k$ solves the following problem in parallel:
\begin{align*}
    \bm \beta_k^{t + 1} & = \arg\min_{\bm \beta_k} \mathcal{\bm L}_k(\beta_0, \bm \beta_k) + (\bm b_k^t)^{\top} \bm \beta_k + \frac{\mu}{2} \|\bm \beta_k - \bm z^t\|^2,\\
    \bm b_k^{t + 1} & = \bm b_k^t + \mu (\bm \beta_k^{t + 1} - \bm z^{t}).
\end{align*}
$\bullet$ \textbf{Central node:} The central node aggregates the results:
\begin{align*}
    \beta_0^{t+1} & = \arg\min_{\beta_0} \sum_{k = 1}^{K} \mathcal{\bm L}_k(\beta_0,\bm \beta_k^{t+1}) \\
    \bm z^{t + 1} & = \arg\min_{\bm z} \mathcal{\bm R}(\bm z) - (\bm b_k^{t+1})^{\top} \bm z + \frac{\mu}{2} \|\bm \beta_k^{t + 1} - \bm z\|^2
\end{align*}
\subsection{Music genre classification method}\label{sec2.3}
The task of music genre classification typically adheres to a two-stage paradigm of feature representation and classifier design. Its research trajectory reflects the evolution of machine-learning techniques from those relying on expert knowledge to a data-driven paradigm.

Early research centered around classification methods based on handcrafted features. Drawing on domain expertise, researchers meticulously designed and extracted a series of statistical features from audio signals. These features included mel-frequency cepstral coefficients (MFCC), spectral centroid, and rhythm features, which were combined to form discriminative feature vectors. Subsequently, these feature vectors were fed into traditional classifiers such as SVM, Gaussian mixture models, or decision trees for training and prediction \cite{Lopes2010SelectionOT, Costa2011MusicHR}. Nevertheless, the quality of feature engineering is highly contingent on professional experience, and handcrafted features struggle to comprehensively capture the intricate non-linear structures and high-level semantics in music.

The advent of deep learning has spurred a shift in the research paradigm towards classification methods based on end-to-end learning. This approach takes the raw waveform of audio or time-frequency representations, such as Mel spectrograms, as input. Deep neural networks, primarily convolutional neural networks (CNN, \cite{Pelchat2020NeuralNM}) and recurrent neural networks (RNN, \cite{Yu2020DeepAB}), are employed to automatically learn the mapping from low-level signals to high-level genre labels. This approach bypasses the arduous process of handcrafted feature design. Fueled by large-scale data, it has achieved considerably higher classification accuracy than traditional methods on benchmark datasets like FMA. However, the performance improvement is accompanied by a deficit in interpretability. The black-box nature of deep models makes it challenging to discern the rationale behind classifying a particular piece of music into a specific genre. This limitation restricts its application in scenarios that demand human understanding and trust, such as music education and musicological analysis.

In recent years, the academic community has reevaluated the significance of interpretability, leading to two prominent trends. One trend is the development of post-hoc explanation tools, such as saliency maps, for high-performance deep models. The other is the exploration of models with inherent interpretability on novel data. This paper aligns with the latter approach. We will return to the framework of highly interpretable linear models, exemplified by SS-SVMs. By harnessing modern optimization theories and parallel computing technologies, we aim to equip these models to handle contemporary large-scale music data effectively. Our objective is to attain both competitive classification performance and clear and transparent feature-level explanations, thereby offering a quantitative and reliable computational tool for music style analysis.
\section{Unified optimization framework and parallel algorithm}\label{sec3}
In this section, we aim to construct a unified optimization framework and design its corresponding parallel distributed algorithm. 
\subsection{Unified optimization framework}\label{sec3.1}
Based on the previous review of various convex and non-convex structured sparse regularization terms, we propose the following unified optimization framework.
Let $\bm{1}_n$ represent $n$-dimensional vector whose elements are all 1, and $\bm{Y}$ be a diagonal matrix with its diagonal elements to be the vector $\bm{y}$. Take $\bar{\bm{X}} = \bm{YX}$, the loss functions can be express as 
\begin{equation}
\label{loss}
    \frac{1}{n}\sum_{i=1}^n \mathcal{L}[1-y_i(\bm x_i^T \bm \beta  + \beta_0)] = \mathcal{L}(\bm{1}_n - \bar{\bm{X}}\bm{\beta} - \bm{y}\beta_0)
\end{equation}
where $\mathcal{L}: \mathbb{R} \rightarrow [0, \infty)$ is a loss function. In the previously discussed SVM model, the loss function $\mathcal{L}$ is typically chosen as the hinge loss function. However, in the relevant research on SVM, there are also some other loss functions, which constitute important variants of the SVM model. Liang et al. \cite{Liang2024LinearizedAD} conducted a systematic introduction in this regard . They summarized six commonly used loss functions in the SVM model, namely hinge loss, least squares loss, huberized hinge loss, squared hinge loss, pinball loss, and huberized pinball loss, and provided the corresponding proximal operators for these loss functions. This work can be smoothly linked to the research in this paper.

Taking into account the extensions of various structured sparse regularization terms comprehensively, we obtain the \textbf{unified optimization form} of the SS-SVM model:
\begin{equation}
    \label{sr-SVM}
    \mathop {\min }\limits_{\beta_0,\bm{\beta}} \mathcal{L}(\bm{1}_n - \bar{\bm{X}}\bm{\beta} - \bm{y}\beta_0) + \mathcal{R}_{\lambda_1}(|\bm{\beta}|) + \mathcal{R}_{\lambda_2}(\bm{G\beta}).
\end{equation}
where $\bm{G}$ is a matrix that changes with the structured sparse regularization terms. For elastic net and sparse group Lasso, $\bm{G}$ is the $n$-dimensional identity matrix, and for sparse fused Lasso, $\bm{G}$ is $\bm{F}$.

This unified form is highly versatile. In terms of regularization terms, it covers three basic structures, namely the elastic net, sparse group Lasso, and sparse fused Lasso, as well as their corresponding non-convex (MCP/SCAD) variants, resulting in a total of 9 combinations. Regarding loss functions, thanks to the work of \cite{Liang2024LinearizedAD}, 6 common loss functions can be seamlessly integrated into this framework. Therefore, a total of $6 \times 9 = 54$ different SS-SVM models can be derived from this framework, which can flexibly adapt to different data characteristics and problem scenarios. Our subsequent algorithm design will be based on this unified framework, which means that the proposed algorithm also has a high degree of versatility and effectiveness. It can efficiently solve these 54 models, thus providing more flexible and accurate solutions for diverse practical problems and promoting the application of SVM models in a wider range of fields.
\subsection{Parallel algorithm design}\label{sec3.2} 
To efficiently solve the optimization problem under this unified framework (\ref{sr-SVM}), especially in the scenario where large-scale data is stored in parallel, we next propose a distributed parallel algorithm based on the consensus ADMM. 

Supported that the original dataset $\{\bm{X}, \bm{y}\} = \{(\bm{x}_i, y_i)\}_{i=1}^n$ is partitioned into $K$ non-overlapping blocks in the context of distributed storage and stored on $K$ machines. Each machine $k$ only has access to its local data $\{\bm X_k, \bm y_k\}, k=1,2, \cdots, K$ and hence local loss function $\mathcal{L}(\bm{1}_k - \bar{\bm X}_k\bm{\beta}_k - \bm{y}_k\beta_0)$ and the regularization $\mathcal{R}_{\lambda_1}(|\bm{\beta}|) + \mathcal{R}_{\lambda_2}(\bm{G\beta})$. Then the formula (\ref{sr-SVM}) can be rewritten as
\begin{equation}
    \label{distributed structured sparse SVM}
    \mathop {\min }\limits_{\{\bm \beta_k\}, \beta_0,\bm{\beta}} \sum_{k=1}^K \mathcal{L}(\bm{1}_k - \bar{\bm X}_k\bm{\beta}_k - \bm{y}_k\beta_0) + \mathcal{R}_{\lambda_1}(|\bm{\beta}|) + \mathcal{R}_{\lambda_2}(\bm{G\beta}).
\end{equation}
where $\bm \beta_k \in \mathbb{R}^p$, $\bar{\bm X}_k \in \mathbb{R}^{n_k \times p}$, $\bm 1_k \in \mathbb{R}^{n_k}, \bm y_k \in \mathbb{R}^{n_k}$, and $\sum_{k=1}^K n_k = n$.

To deal with the non-differentiability of the loss function and the non-separability of the structured sparse regularization terms, we introduced additional auxiliary variables $\bm \theta (\in \mathbb{R}^p$ for elastic net and sparse group Lasso; $\in \mathbb{R}^{p-1}$ for sparse fused Lasso), $\bm \xi_k \in \mathbb{R}^{n_k}, k = 1,2, \cdots, K$ and reformulate problem (\ref{distributed structured sparse SVM}) into an equivalent form:
\begin{equation}
    \label{distributed-problem}
    \begin{aligned}
        \mathop {\min }\limits_{\{\bm \beta_k\}, \bm \theta, \{\bm \xi_k\}, \beta_0,\bm{\beta}} & \sum_{k=1}^K \mathcal{L}(\bm \xi_k) + \mathcal{R}_{\lambda_1}(|\bm{\beta}|) + \mathcal{R}_{\lambda_2}(\bm{\theta}) \\
        \text{s.t.} \qquad \quad \ & \bm \xi_k = \bm{1}_k - \bar{\bm X}_k\bm{\beta}_k - \bm{y}_k\beta_0, \ k = 1,2, \cdots, K; \\
        & \bm \beta_k = \bm \beta, \ k = 1,2, \cdots, K; \\
        & \bm \theta = \bm{G\beta} 
    \end{aligned}
\end{equation}

The corresponding augmented Lagrangian function of (\ref{distributed-problem}) is
\begin{equation*}
    \label{lagrangian-function}
    \begin{aligned}
    L_{\mu} (\bm \beta_k, \bm \theta, \bm \xi_k, \beta_0, \bm \beta,  \bm b_k, \bm d_k, \bm e) & = \sum_{k=1}^K \mathcal{L}(\bm \xi_k) + \mathcal{R}_{\lambda_1}(|\bm{\beta}|) + \mathcal{R}_{\lambda_2}(\bm{\theta}) - \sum_{k=1}^K <\bm b_k, \bm \xi_k - \bm{1}_k + \bar{\bm X}_k\bm{\beta}_k + \bm{y}_k\beta_0> \\
    & - \sum_{k=1}^K <\bm d_k, \bm \beta_k - \bm \beta> - <\bm e, \bm \theta - \bm{G\beta}> + \frac{\mu}{2}\sum_{k=1}^K \|\bm \xi_k - \bm{1}_k + \bar{\bm X}_k\bm{\beta}_k + \bm{y}_k\beta_0\|_2^2 \\
    & + \frac{\mu}{2} \sum_{k=1}^K \|\bm \beta_k - \bm \beta\|_2^2 + \frac{\mu}{2} \|\bm \theta - \bm{G\beta}\|_2^2
    \end{aligned}
\end{equation*}
where $\bm b_k, \bm d_k, \bm e$ are dual variables corresponding to the linear constraints of (\ref{distributed-problem}), $\mu > 0$ is the penalty parameter, and $< \cdot >$ represents the standard inner product in Euclidean space. 

After giving the initial values $(\bm \beta_k^0, \bm \xi_k^0, \bm \theta^0, \beta_0^0, \bm \beta^0,  \bm b_k^0, \bm d_k^0, \bm e^0)$, the iterative process of the parallel ADMM can be expressed as
\begin{subequations}
    \label{iteration}
    \begin{align}
    \bm \beta_k^{t+1}  & \leftarrow \mathop {\arg\min }\limits_{\bm{\beta}_k} L_{\mu} (\bm \beta_k, \bm \theta^t, \bm \xi_k^t, \bm \beta^t, \beta_0^t, \bm b_k^t, \bm d_k^t, \bm e^t), \ \ k = 1,2, \cdots, K \label{iteration-betak} 
    \\
    \bm \xi_k^{t+1}  & \leftarrow \mathop {\arg\min }\limits_{\bm{\xi}_k} L_{\mu} (\bm \beta_k^{t+1}, \bm \theta^{t}, \bm \xi_k, \bm \beta^t, \beta_0^t, \bm b_k^t, \bm d_k^t, \bm e^t), \ \ k = 1,2, \cdots, K \label{iteration-xi} 
    \\
    \bm \theta^{t+1} & \leftarrow \mathop {\arg\min }\limits_{\beta_0} L_{\mu} (\bm \beta_k^{t+1}, \bm \theta, \bm \xi_{k+1}^t, \bm \beta^t, \beta_0^t, \bm b_k^t, \bm d_k^t, \bm e^t) \label{iteration-theta} 
    \\
    \bm \beta^{t+1} & \leftarrow \mathop {\arg\min }\limits_{\bm{\beta}} L_{\mu} (\bm \beta_k^{t+1}, \bm \theta^{t+1}, \bm \xi_k^{t+1}, \bm \beta, \beta_0^t, \bm b_k^t, \bm d_k^t, \bm e^t) \label{iteration-beta}
    \\
    \beta_0^{t+1}  & \leftarrow \mathop {\arg\min }\limits_{\beta_0} L_{\mu} (\bm \beta_k^{t+1}, \bm \theta^{t+1}, \bm \xi_k^{t+1}, \bm \beta^{t+1}, \beta_0, \bm b_k^t, \bm d_k^t, \bm e^t) \label{iteration-beta0} 
    \\
    \bm b_k^{t+1} & \leftarrow \bm b_k^t - \mu(\bm \xi_k^{t+1} - \bm{1}_k + \bar{\bm X}_k\bm{\beta}_k^{t+1} + \bm{y}_k\beta_0^{t+1}), \ \ k = 1,2, \cdots, K \label{iteration-b} 
    \\
    \bm d_k^{t+1} & \leftarrow \bm d_k^t - \mu(\bm \beta_k^{t+1} - \bm \beta^{t+1}), \ \ k = 1,2, \cdots, K \label{iteration-d} 
    \\
    \bm e^{t+1} & \leftarrow \bm e^t - \mu(\bm \theta^{t+1} - \bm{G\beta}^{t+1}) \label{iteration-e}
\end{align}
\end{subequations}
To save time, the iterative process for implementing $\bm \beta_k^{t + 1}$ in \eqref{iteration-betak}, $\bm \xi_k^{t + 1}$ in \eqref{iteration-xi}, $\bm b_k^{t + 1}$ in \eqref{iteration-b}, and $\bm d_k^{t + 1}$ in \eqref{iteration-d} is parallelized.  Next, we will provide detailed closed-form solutions for each subproblem in the iterative process.
\subsubsection{For the problem (\ref{iteration-betak})}\label{sec3.2.1}
\vspace{-1em}
\quad \ Regarding the update of $\bm \beta_k^{t+1}, k = 1,2, \cdots, K$, after rearranging the terms in (\ref{iteration-betak}) and omitting constant terms, it can be expresses as:
\begin{equation}
    \label{betak}
    \bm \beta_k^{t+1} \leftarrow \mathop {\arg\min }\limits_{\bm \beta_k} \left\{\frac{\mu}{2}\sum_{k=1}^K \left\|\bm \xi_k^{t} - \bm 1_k + \bar{\bm X}_k\bm \beta_k + \bm y_k \beta_0^{t} - \frac{\bm b_k^t}{\mu}\right\|_2^2 + \frac{\mu}{2}\sum_{k=1}^K\left\|\bm \beta_k - \bm \beta^{t} - \frac{\bm d_k^t}{\mu}\right\|_2^2\right\}.
\end{equation}
This problem is quadratic and differentiable, and we can directly obtain its result by taking the derivative.
\begin{equation}
    \label{betak-result}
    \bm \beta_k^{t+1} \leftarrow (\bar{\bm X}_k^T\bar{\bm X}_k + \bm I_p)^{-1} \left[ \bar{\bm X}_k^T( -\bm \xi_k^{t} + \bm 1_k - \bm y_k \beta_0^{t} + \frac{\bm b_k^t}{\mu}) + \bm \beta^{t} + \frac{\bm d_k^t}{\mu}\right].
\end{equation}
In the process of solving $\bm \beta_k$, the calculation of the matrix inverse is a crucial step, and the complexity of this calculation is closely related to the number of rows $n_k$ and the number of columns $p$ of the matrix $\bar{\bm X}_k$. When $p > n_k$, Yu et al. \cite{Yu2017APA} proposed the Woodbury matrix identity can be utilized, that is, $(\bar{\bm X}_k^T\bar{\bm X}_k + \bm I_p)^{-1}=\bm I_p - \bar{\bm X}_k^T (\bm I_{n_k} + \bar{\bm X}_k \bar{\bm X}_k^T)^{-1}\bar{\bm X}_k$. This method has obvious advantages when $n_k$ is small because the dimension of the matrix $\bm I_{n_k}+\bar{\bm X}_k\bar{\bm X}_k^T$ is small, and its inverse only needs to be calculated once during the ADMM iteration. When both $n_k$ and $p$ are large, directly calculating the matrix inverse is time-consuming and may cause numerical stability issues. In this case, the conjugate gradient method introduced in \cite{Saad2003IterativeMF} is more appropriate. It is an efficient iterative algorithm for solving symmetric positive-definite linear equations and has significant advantages in dealing with large-scale sparse problems. 
\subsubsection{For the problem (\ref{iteration-xi})}\label{sec3.2.3}
\vspace{-1em}
\quad Regarding the update of $\bm \xi_k^{t+1}, k = 1,2, \cdots, K$, after rearranging the terms in (\ref{iteration-xi}) and omitting constant terms, it can be expresses as:
\begin{equation}
    \label{xi}
    \bm \xi_k^{t+1} \leftarrow \mathop {\arg\min }\limits_{\bm \xi_k} \left\{\sum_{k=1}^K \left[\mathcal{L}(\bm \xi_k) + \frac{\mu}{2} \|\bm \xi_k - \bm 1_k + \bar{\bm X}_k \bm \beta_k^{t+1} + \bm y_k \beta_0^{t} - \frac{\bm b_{k_i}^t}{\mu}\|_2^2\right]\right\}.
\end{equation}
where $\mathcal{L}(\bm \xi_k) = \frac{1}{n_k} \sum_{i=1}^{n_k} L(\xi_{k_i})$ is separable. Then the problem (\ref{xi}) can be expressed in the form of components.
\begin{equation}
    \label{xi-component}
    \xi_{k_i}^{t+1} \leftarrow \mathop {\arg\min }\limits_{\xi_{k_i}} \left\{\sum_{i=1}^{n_k} \left[\frac{1}{n_k}L( \xi_{k_i}) + \frac{\mu}{2}\left(\xi_{k_i} - 1 + y_{k_i}(\bm x_{k_i}^T \bm \beta_k^{t+1} + \beta_0^{t}) - \frac{b_i^t}{\mu}\right)^2\right]\right\}.
\end{equation}
Obviously, the problem (\ref{xi-component}) can be separated into $n$ proximal operators, which is defined as $\text{prox}_{s, {L}}(\zeta) = \mathop {\arg\min }\limits_{x} \{\frac{1}{n}L(x) + s/2(x - \zeta)^2\}$. And its closed-form solution will vary with the change of ${L}$, that is, the loss function.

In this paper, we consider six loss functions, namely least squares loss, hinge loss, huberized hinge loss, squared hinge loss, pinball loss and huberized pinball loss. The closed-form solutions of the proximal operators for these six loss functions have been fully discussed in Table 1 of \cite{Liang2024LinearizedAD}. We will not elaborate on them here but directly present the closed-form solution of equation (\ref{xi}) in Table \ref{tab4} in the form of components. For the convenience of expression, we introduce the symbol $\zeta_{k_i}^{t+1} = 1 - y_{k_i}(\bm x_{k_i}^T \bm \beta_k^{t+1} + \beta_0^{t}) + \frac{b_{k_i}^t}{\mu}$.
\begin{table}[H]
    \centering
    \caption{The closed-form solutions of (\ref{xi-component}) under different loss functions}
    \renewcommand{\arraystretch}{1.5}
    \setlength{\belowcaptionskip}{0.2cm} 
    \setlength{\tabcolsep}{7mm}{
    \begin{tabular}{ll}
    \hline
    Loss function  & Closed-form solutions of (\ref{xi-component}) \\
    \hline
     Hinge             &  $\xi_{k_i}^{t+1} = \max\{\zeta_{k_i}^{t+1} - \frac{1}{n_k\mu}, \min(0,\zeta_{k_i}^{t+1})\}$ \\
     Least square      &  $\xi_{k_i}^{t+1} = \begin{cases}
         \frac{n_k\mu\zeta_{k_i}^{t+1}}{n_k\mu + 2\zeta_{k_i}^{t+1}}, & \zeta_{k_i}^{t+1} \ge 0;\\
         \frac{n_k\mu\zeta_{k_i}^{t+1}}{n_k\mu + 2(1-\tau)}, & \zeta_{k_i}^{t+1} < 0.
     \end{cases}$ \\
     Square hinge      &  $\xi_{k_i}^{t+1} = \begin{cases}
         \frac{n_k\mu\zeta_{k_i}^{t+1}}{n_k\mu+1}, & \zeta_{k_i}^{t+1} \ge 0; \\
         \zeta_{k_i}^{t+1}, & \zeta_{k_i}^{t+1} < 0.
     \end{cases}$ \\
     Huberized hinge   &  $\xi_{k_i}^{t+1} = \max\{\zeta_{k_i}^{t+1} - \frac{1}{n_k\mu}, \min(\zeta_{k_i}^{t+1}, \frac{n_k\delta \mu \zeta_{k_i}^{t+1}}{1 + n_k\delta \mu})\}$ \\
     Pinball           &  $\xi_{k_i}^{t+1} = \max\{\zeta_{k_i}^{t+1} - \frac{1}{n_k\mu}, \min(0,\zeta_{k_i}^{t+1} + \frac{\tau}{n_k\mu})\}$ \\
     Huberized pinball &  $\xi_{k_i}^{t+1} = \begin{cases}
         \frac{n_k\mu\delta\zeta_{k_i}^{t+1}}{n_k\mu\delta + 1} + \frac{1}{n_k\mu}\left(\frac{n_k\mu\delta\zeta_{k_i}^{t+1}}{n_k\mu\delta + 1} - 1 \right) I(\zeta_{k_i}^{t+1} > \frac{1}{n_k\mu} + \delta), & \zeta_{k_i}^{t+1} > 0;\\
         \frac{n_k\mu\delta\zeta_{k_i}^{t+1}}{n_k\mu\delta + \tau} + \frac{\tau}{n_k\mu} \left( \frac{n_k\mu\delta\zeta_{k_i}^{t+1}}{n_k\mu\delta + \tau} + 1 \right) I(\zeta_{k_i}^{t+1} \leq -\frac{\tau}{n_k\mu} - \delta), & \zeta_{k_i}^{t+1} \leq 0.
     \end{cases}$ \\
     \hline
    \end{tabular}}
    \label{tab4}
\end{table}
\subsubsection{For the problem (\ref{iteration-theta})}\label{sec3.2.2}
\vspace{-1em}
\quad \ Regarding the update of $\bm \theta^{t+1}$, after rearranging the terms in (\ref{iteration-theta}) and omitting constant terms, it can be expresses as:
\begin{equation}
    \label{theta}
    \bm \theta^{t+1} \leftarrow \mathop {\arg\min }\limits_{\bm \theta} \left\{\mathcal{R}_{\lambda_2}(\bm \theta) + \frac{\mu}{2} \|\bm \theta - \bm{G\beta}^{t} - \frac{\bm e^t}{\mu} \|_2^2\right\}.
\end{equation}
The result of (\ref{theta}) depends on $\mathcal{R}_{\lambda_2}(\bm \theta)$ and $\bm G$, both of which are determined by structured sparse regularization terms. Therefore, we will discuss the specific updated results of $\bm \theta$ according to different structured sparse regularization terms.
\\
\\
$\bullet$ For the EN regularization, $\mathcal{R}_{\lambda_2}(\bm \theta) = \lambda_2 \|\bm \theta\|_2^2$ and $\bm G = \bm I_n$.  Formula (\ref{theta}) can be express as
\begin{equation}
    \label{theta-en}
    \bm \theta^{t+1} \leftarrow \mathop {\arg\min }\limits_{\bm \theta} \left\{\lambda_2 \|\bm \theta\|_2^2 + \frac{\mu}{2} \|\bm \theta - \bm{\beta}^{t} - \frac{\bm e^t}{\mu} \|_2^2\right\}.
\end{equation}
The minimization problem is quadratic and differentiable, allowing us to solve
the subproblem by solving the following linear equations,
\begin{equation}
    \label{theta-result-1}
    \bm \theta^{t+1} \leftarrow \frac{\mu(\bm \beta^{t} + \bm e^t/\mu)}{2\lambda_2 + \mu}.
\end{equation}
\\
\\
$\bullet$ For the SFL regularization, $\mathcal{R}_{\lambda_2}(\bm \theta) = \lambda_2 \|\bm \theta\|_1$ and $\bm G = \bm F$.  Formula (\ref{theta}) can be express as
\begin{equation}
    \label{theta-sfl}
    \bm \theta^{t+1} \leftarrow \mathop {\arg\min }\limits_{\bm \theta} \left\{\lambda_2 \|\bm \theta\|_1 + \frac{\mu}{2} \|\bm \theta - \bm{F\beta}^{t} - \frac{\bm e^t}{\mu} \|_2^2\right\}.
\end{equation}
This is a soft-thresholding operator, and we can directly present the result. 
\begin{equation}
    \label{theta-result-2}
    \bm \theta^{t+1} \leftarrow \text{Shrink} \left[ \bm{F\beta}^{t} + \frac{\bm e^t}{\mu}, \frac{\lambda_2}{\mu} \right],
\end{equation}
where $\text{Shrink}[a,b] = \text{sign}(a) \max\{|a| - b\}$.
\\
\\
$\bullet$ For the SGL regularization, $\mathcal{R}_{\lambda_2}(\bm \theta) = \lambda_2 \|\bm \theta\|_{2,1}$ and $\bm G = \bm I_n$. Formula (\ref{theta}) can be express as
\begin{equation}
    \label{theta-sgl}
    \bm \theta^{t+1} \leftarrow \mathop {\arg\min }\limits_{\bm \theta} \left\{\lambda_2 \|\bm \theta\|_{2,1} + \frac{\mu}{2} \|\bm \theta - \bm{\beta}^{t} - \frac{\bm e^t}{\mu} \|_2^2\right\}.
\end{equation}
By recalling the definition of the $\ell_{2,1}$ norm $(\|\bm \theta\|_{2,1} = \sum_{m=1}^M\|\bm{\theta}_{(m)}\|_2)$, we can find that $\bm \theta$ is divided into $M$ groups. Since the whole equation (\ref{theta-sgl}) is separable, the above formula can be regarded as the sum of $M$ $\ell_2$ norm proximal operators, which is defined as $\text{prox}_{s,\lambda\|\bm \theta\|_{2,1}}(\bm x) = \mathop {\arg\min }\limits_{\bm \theta} \{\lambda \|\bm \theta\|_{2,1} + s/2\|\bm \theta - \bm x\|_2^2\}$. The closed-form solution for $\text{prox}_{s,\lambda\|\bm \theta\|_{2,1}}$ was first presented by \cite{Yuan2006ModelSA} and can be expressed as 
\[\text{prox}_{s,\lambda\|\bm \theta\|_{2,1}}(\bm x) = \frac{\bm x}{\|\bm x\|_2} \cdot \left[\|\bm x\|_2 - \frac{\lambda}{s}\right]_+. \]
Then, we have
\begin{equation}
    \label{theta-result-3}
    \bm \theta_{(m)}^{t+1} \leftarrow \frac{\bm{\beta}_{(m)}^{t} + {\bm e_{(m)}^t}/{\mu}}{\|\bm{\beta}_{(m)}^{t} + {\bm e_{(m)}^t} /{\mu}\|_2} \cdot \left[\|\bm{\beta}_{(m)}^{t} + \frac{\bm e_{(m)}^t}{\mu}\|_2  - \frac{\lambda_2}{\mu}\right]_+, m = 1,2, \cdots, M.
\end{equation}
For convenience, we summarize the updates of $\bm \theta$ under different regularization terms in Table \ref{tab3}.
\begin{table}[H]
    \centering
    \caption{The updated results of $\bm \theta^{t+1}$ under the different structured sparse regularization terms}
    \renewcommand{\arraystretch}{1.5}
    \setlength{\belowcaptionskip}{0.3cm} 
    \setlength{\tabcolsep}{7mm}{
    \begin{tabular}{ll}
    \hline
    CRT  &  The updated result of $\bm \theta^{t+1}$\\
    \hline
    EN   &  $\bm \theta^{t+1} \leftarrow \frac{\mu(\bm \beta^{t} + \bm e^t/\mu)}{2\lambda_2 + \mu}$ \\
    SFL  &  $\bm \theta^{t+1} \leftarrow \text{Shrink} \left[ \bm{F\beta}^{t} + \frac{\bm e^t}{\mu}, \frac{\lambda_2}{\mu} \right]$ \\
    SGL  &  \makecell[c]{$\bm \theta_{(m)}^{t+1} \leftarrow \frac{\bm{\beta}_{(m)}^{t} + {\bm e_{(m)}^t}/{\mu}}{\|\bm{\beta}_{(m)}^{t} + {\bm e_{(m)}^t}/{\mu}\|_2} \cdot \left[\|\bm{\beta}_{(m)}^{t} + \frac{\bm e_{(m)}^t}{\mu}\|_2  - \frac{\lambda_2}{\mu}\right]_+, m = 1,2, \cdots, M$} \\
    \hline
    \end{tabular}}
    \label{tab3}
\end{table}
\subsubsection{For the problem (\ref{iteration-beta})}\label{sec3.2.4}
\vspace{-1em}
\quad \ Regarding the update of $\bm \beta^{t+1}$, after rearranging the terms in (\ref{iteration-beta}) and omitting constant terms, it can be expresses as:
\begin{equation}
    \label{beta}
    \bm \beta^{t+1} \leftarrow \mathop {\arg\min }\limits_{\bm{\beta}} \left\{ P_{\lambda_1}(|\bm \beta|) + \frac{\mu}{2}\sum_{k=1}^K\|\bm \beta_k^{t+1} - \bm \beta - \frac{\bm d_k^t}{\mu}\|_2^2 + \frac{\mu}{2}\|\bm \theta^{t+1} - \bm{G\beta} - \frac{\bm e^t}{\mu}\|_2^2 \right\}
\end{equation}
Evidently, the specific result of the above formula hinges on the selection of $P_{\lambda_1}(|\bm \beta|)$ and $\bm G$. Next, we will conduct a case-by-case discussion. 
\\
\\
$\bullet$ When $P_{\lambda_1}(|\bm \beta|) = \lambda_1 \|\bm \beta\|_1$ and $\bm G = \bm I_n$, corresponding to the EN and SGL regularization terms, we rearrange the optimization equation and eliminate certain constant terms that have no bearing on the optimization target variable $\bm \beta$, then the subsequent equation can be derived.
\begin{equation}
    \label{convex-I-beta}
    \bm \beta^{t+1} \leftarrow \left\{ \mathop {\arg\min }\limits_{\bm{\beta}} \lambda_1\|\bm \beta\|_1 + \frac{\mu(K+1)}{2} \|\bm \beta - \frac{K (\bar{\bm \beta}^{t+1} - \bar{\bm d}^t/\mu) + \bm \theta^{t+1} - \bm e^t/\mu}{K+1}\|_2^2 \right\}
\end{equation}
where $\bar{\bm \beta}^{t+1} = K^{-1}\sum_{k=1}^K \bm \beta_k^{t+1}$ and $\bar{\bm d}^t = K^{-1} \sum_{k=1}^K \bm d_k^t$. Evidently, the above optimization problem (\ref{convex-I-beta}) corresponds to a soft-thresholding operator, and we can directly obtain the result.
\begin{equation}
    \label{beta-result-1}
    \bm \beta^{t+1} \leftarrow \text{Shrink} \left[ \frac{K (\bar{\bm \beta}^{t+1} - \bar{\bm d}^t/\mu) + \bm \theta^{t+1} - \bm e^t/\mu}{K+1}, \frac{\lambda_1}{\mu{K+1}} \right]
\end{equation}
\\
\\
$\bullet$ When $P_{\lambda_1}(|\bm \beta|) = \lambda_1 \|\bm \beta\|_1$ and $\bm G = \bm F$, corresponding to the SFL regularization term, we can get
\begin{equation}
    \label{convex-F-beta}
    \bm \beta^{t+1} \leftarrow \left\{ \mathop {\arg\min }\limits_{\bm{\beta}} \lambda_1\|\bm \beta\|_1 + \frac{\mu}{2} \bm \beta^T (K \bm I_p + \bm F^T \bm F) \bm \beta - \mu \bm \beta^T \left[K(\bar{\bm \beta}^{t+1} - \frac{\bar{\bm d}^t}{\mu}) + \bm F^T (\bm \theta^{t+1} - \frac{\bm e^t}{\mu})\right] \right\}
\end{equation}
It is quite evident that, owing to the non-identity matrix preceding the quadratic term of $\bm \beta$ and the presence of $\|\bm \beta\|_1$, Equation (\ref{convex-F-beta}) does not possess a closed-form solution. While numerical approaches like the coordinate descent method can be employed to solve  (\ref{convex-F-beta}), this would substantially elevate the computational load. In this context, we put forward the use of a linearization technique to approximate this optimization problem and acquire a closed-form solution for the problem (\ref{convex-F-beta}). 

We can linearize the quadratic term $\frac{\mu}{2} \bm \beta^T (K \bm I_p + \bm F^T \bm F) \bm \beta - \mu \bm \beta^T \left[K(\bar{\bm \beta}^{t+1} - \frac{\bar{\bm d}^t}{\mu}) + \bm F^T (\bm \theta^{t+1} - \frac{\bm e^t}{\mu})\right]$ in (\ref{convex-F-beta}) and replace it by
\begin{equation}
    \label{linearization}
    \left\{\mu (K\bm I_p + \bm F^T \bm F)^T \bm {\beta^t} - \mu \left[K(\bar{\bm \beta}^{t+1} - \frac{\bar{\bm d}^t}{\mu}) + \bm F^T (\bm \theta^{t+1} - \frac{\bm e^t}{\mu})\right] \right\} (\bm \beta - \bm \beta^t) + \frac{\eta}{2}\|\bm \beta - \bm \beta^t\|_2^2
\end{equation}
where the linearization parameter $\eta > 0$ serves to regulate the closeness between $\bm \beta$ and $\bm \beta^t$. To guarantee the convergence of the algorithm, $\eta$ must exceed the largest eigenvalue of the matrix $(K \bm I_p + \bm F^\top \bm F)$. It is worth noting that due to the special structure of the matrix $\bm F$, the largest eigenvalue of $(K \bm I_p + \bm F^\top \bm F)$ is exactly $K + 4$. As a result, we can conveniently set $\eta = K + 4.01$. Consequently, we can tackle the following problem
\begin{equation}\small
    \label{beta-linearization}
    \bm \beta^{t+1} \leftarrow \mathop {\arg\min }\limits_{\bm{\beta}} \left\{ \lambda_1\|\bm \beta\|_1 + \frac{\eta}{2} \left\|\bm \beta - \bm \beta^t + \frac{\mu}{\eta} \left[ (K\bm I_p + \bm F^T \bm F) \bm {\beta^t} - K(\bar{\bm \beta}^{t+1} - \frac{\bar{\bm d}^t}{\mu}) - \bm F^T (\bm \theta^{t+1} - \frac{\bm e^t}{\mu}) \right]\right\|_2^2 \right\}
\end{equation}
to attain an approximate solution for the (\ref{convex-F-beta}).
\begin{equation}
    \label{beta-result-2}
    \bm \beta^{t+1} \leftarrow \text{Shrink} \left[ \bm \beta^t - \frac{\mu}{\eta}\left( (K\bm I_p + \bm F^T \bm F) \bm {\beta^t} - K(\bar{\bm \beta}^{t+1} - \frac{\bar{\bm d}^t}{\mu}) - \bm F^T (\bm \theta^{t+1} - \frac{\bm e^t}{\mu}) \right) ,\frac{\lambda_1}{\eta}\right]
\end{equation}
\\
\\
$\bullet$ When $P_{\lambda_1}(|\bm \beta|) = \text{SCAD or MCP}$, corresponding to the non-convex variant of SS-SVM, we need to introduce the local linear approximation (LLA) method proposed by \cite{Zou2008OneSE}. As stated in \cite{Zou2008OneSE}, non-convex regularization terms can be approximated as
\begin{equation}
    \label{lla}
    P_{\lambda}(|\beta_j|) \approx P_{\lambda}(|\beta_j^{(0)}|) + P_{\lambda}^{'} (|\beta_j^{(0)}|) (|\beta_j| - |\beta_j^{(0)}|), \ \text{for} \ \beta_j \approx \beta_j^{(0)}, \ j = 1,2, \cdots, p,
\end{equation}
where the relevant results of $P_{\lambda}^{'} (|\beta_j|)$ are summarized in the Table \ref{tab5}. In addition, it should be emphasized that the accuracy of the LLA algorithm is affected by the selection of the initial value $\beta_j^{(0)}$. In this paper, we adopt the method proposed in \cite{Gu2018ADMMFH}, that is, we use the solution of the $\ell_1$-SVM as the initial value $\beta_j^{(0)}$.
\begin{table}[H]
    \centering
    \renewcommand{\arraystretch}{1.5}
    \setlength{\belowcaptionskip}{0.2cm} 
    \caption{The $P_{\lambda}^{'} (|\beta_j|)$ of SCAD and MCP}
    \setlength{\tabcolsep}{7mm}{
    \begin{tabular}{ll}
    \hline
    Name & \qquad \qquad \qquad Formula \\ \hline
     SCAD $(a > 2)$ & $P_\lambda^{'}(|\beta_j|) = \begin{cases}
    \lambda_1, & \text{{if }} |\beta_j| \leq \lambda_1, \\[3mm]
    \frac{a\lambda_1 -  |\beta_j|}{a-1}, & \text{{if }} \lambda_1 < |\beta_j|  < a  \lambda_1, \\[3mm]
    0 , & \text{{if }} |\beta_j| \ge a  \lambda_1. \end{cases}$\\
    MCP $(a > 0)$ &  $P_\lambda^{'}(|\beta_j|) = \begin{cases}
    \lambda_1  - \frac{|\beta_j|}{a}, & \text{{if }}  |\beta_j|  \le a \lambda_1\\
    0, & \text{{if }}  |\beta_j|  > a \lambda_1 \\
    \end{cases}$\\
    \hline
    \end{tabular}}
    \label{tab5}
\end{table}

Then, the problem (\ref{beta}) can be approximately formulated as
\begin{equation}
    \label{lla-beta}
    \bm \beta^{t+1} \leftarrow \mathop {\arg\min }\limits_{\bm{\beta}} \left\{ \sum_{j=1}^p P_{\lambda_1}^{'}(|\beta_j^t|) |\beta_j| + \frac{\mu}{2}\sum_{k=1}^K\|\bm \beta_k^{t+1} - \bm \beta - \frac{\bm d_k^t}{\mu}\|_2^2 + \frac{\mu}{2}\|\bm \theta^{t+1} - \bm{G\beta} - \frac{\bm e^t}{\mu}\|_2^2 \right\}.
\end{equation}
We can observe that, compared with the problem (\ref{beta}), the only difference is that the parameter $\lambda$ in it has been replaced with $P_{\lambda_1}^{'}(|\beta_j^t|)$. This implies that we only need to make minor adjustments to the above results to to obtain the result of (\ref{lla-beta}). For $\ j = 1,2, \cdots, p,$
\begin{align}
    \label{lla-beta-result-1}
    & \beta_j^{t+1} \leftarrow \text{Shrink} \left[ \frac{K (\bar{\bm \beta}^{t+1} - \bar{\bm d}^t/\mu) + \bm \theta^{t+1} - \bm e^t/\mu}{K+1}, \frac{P_{\lambda_1}^{'}(|\beta_j^t|)}{\mu{K+1}} \right], \\
     \label{lla-beta-result-2}
    & \beta_j^{t+1} \leftarrow \text{Shrink} \left[ \bm \beta^t - \frac{\mu}{\eta}\left( (K\bm I_p + \bm F^T \bm F) \bm {\beta^t} - K(\bar{\bm \beta}^{t+1} - \frac{\bar{\bm d}^t}{\mu}) - \bm F^T (\bm \theta^{t+1} - \frac{\bm e^t}{\mu}) \right) ,\frac{P_{\lambda_1}^{'}(|\beta_j^t|)}{\eta}\right],  
\end{align}    
where the value of the parameter $P_{\lambda_1}^{'}(|\beta_j^t|)$ is calculated according to Table \ref{tab5}. The solutions in \eqref{lla-beta-result-1} and \eqref{lla-beta-result-2} are for $\bm G = \bm I_n$ and $\bm G = \bm F$ respectively. 

\subsubsection{For the problem (\ref{iteration-beta0})}\label{sec3.2.5}
\vspace{-1em}
\quad Regarding the update of $\bm \beta_0^{t+1}$, after rearranging the terms in (\ref{iteration-beta0}) and omitting constant terms, it can be expresses as:
\begin{equation}
    \label{beta0}
    \beta_0^{t+1} \leftarrow \mathop {\arg\min }\limits_{\beta_0} \left\{\frac{\mu}{2} \sum_{k=1}^K \|\xi_k^{t+1} - \bm{1}_k + \bar{\bm{X}_k}\bm{\beta}_k^{t+1} + \bm{y}_k\beta_0 - \frac{\bm b_k^t}{\mu} \|_2^2\right\}.
\end{equation}
This problem is quadratic and differentiable, and we can directly obtain its result,
\begin{equation}
    \label{beta0-result}
    \beta_0^{t+1} \leftarrow\sum_{k=1}^K \frac{1}{n_k} \bm y_k^T \left( -\xi_k^{t+1} + \bm{1}_k - \bar{\bm{X}_k}\bm{\beta}_k^{t+1} + \frac{\bm b_k^t}{\mu} \right).
\end{equation}
Since the update of $\beta_0^{t+1}$ requires the data on all local machines, to reduce the communication cost, we calculate 
\begin{equation}
    \label{beta0k}
    c_k^{t+1} = \frac{1}{n_k} \bm y_k^T \left( -\xi_k^{t+1} + \bm{1}_k - \bar{\bm{X}_k} \bm{\beta}_k ^{t+1} + \frac{\bm b_k^t}{\mu} \right), k = 1,2, \cdots, K,
\end{equation}
on each local machine respectively and then aggregate them to the central machine for addition.
\subsection{Gaussian Back Substitution}\label{sec3.3}
\vspace{-1em}
\quad \ Although we presented the solution to formula (\ref{distributed-problem}) in Section \ref{sec3.2}, the convergence of this solution cannot be guaranteed. The formula (\ref{distributed-problem}) involves $2K + 3$ primal variables, i.e., $\{\bm \beta_k, \bm \theta, \bm \xi_k, \bm \beta, {\beta_0}\}, k = 1,2, \cdots, K$, and the constraint matrices associated with these variables cannot be split into two mutually orthogonal groups. Therefore, it cannot be transformed into the traditional two-block ADMM algorithm to ensure convergence \cite{Chen2016TheDE}. To address this issue, we will employ the Gaussian back substitution method to correct certain iterative solutions in (\ref{iteration}). This method was proposed by \cite{He2012AlternatingDM} and has been widely used to ensure the convergence of multi-block ADMM algorithms,including but not limited to the works of \cite{Wu2025ParallelAA, Fu2019BlockAD, He2018ACO}.

For the convenience of description, we rewrite the formula (\ref{distributed-problem}) into a three-block optimization form. Let ${\bm v_1} = ({\bm \beta_1}^\top, {\bm \beta_2}^\top, \cdots, {\bm \beta_K}^\top, {\bm \theta}^\top)^\top,{\bm v_2} = ({\bm \xi_1}^\top, {\bm \xi_2}^\top, \cdots, {\bm \xi_K}^\top)^\top, {\bm v_3} = ({\bm \beta}^\top, {\beta_0})^\top$ and $\varphi_1({\bm v_1}) = \mathcal{R}_{\lambda_2}({\bm \theta})$, $\varphi_2({\bm v_2}) = \sum_{k=1}^K \mathcal{L}({\bm \xi_k}), \varphi_3({\bm v_3}) = P_{\lambda_1}(|{\bm \beta}|)$, the optimization problem (\ref{distributed-problem}) can be transformed as 
\begin{equation}
    \label{three-block}
    \begin{aligned}
    \mathop {\min }\limits_{{\bm v_1}, {\bm v_2}, {\bm v_3}} & \varphi_1 ({\bm v_1}) + \varphi_2 ({\bm v_2}) + \varphi_3 ({\bm v_3})\\
    \text{s.t.} \ \  & \bm{A}{\bm v_1} + \bm{B}{\bm v_2} + \bm{C}{\bm v_3} = \bm z,
    \end{aligned}
\end{equation}
where $\bm z = (\bm 1_1^\top, \bm 1_2^\top, \cdots, \bm 1_K^\top, \bm 0^\top, \bm 0^\top, \cdots, \bm 0^\top, \bm 0^\top)^\top$, the three matrices $\bm A, \bm B$, and $\bm C$ are defined as follows: 
\[\bm A = [\bm A_1, \bm A_2, \cdots, \bm A_K, \bm A_{K+1}] = \begin{bmatrix}
    \bar{\bm X_1}^\top & \bm 0^\top & \cdots & \bm 0^\top & \bm I_p & \bm 0^\top & \cdots & \bm 0^\top & \bm 0^\top \\
    \bm 0^\top & \bar{\bm X_2}^\top & \cdots & \bm 0^\top & \bm 0^\top & \bm I_p & \cdots & \bm 0^\top & \bm 0^\top \\
    \vdots & \vdots & \ddots & \vdots & \vdots & \vdots & \ddots & \vdots & \vdots \\
    \bm 0^\top & \bm 0^\top & \cdots & \bar{\bm X_K}^\top & \bm 0^\top & \bm 0^\top & \cdots & \bm I_p & \bm 0^\top \\
    \bm 0^\top & \bm 0^\top & \cdots & \bm 0^\top & \bm 0^\top & \bm 0^\top & \cdots & \bm 0^\top & \bm I_G
\end{bmatrix}^\top_{(K+1)\times(2K+1)}\]
\[\bm B = [\bm B_1, \bm B_2, \cdots, \bm B_K] = \begin{bmatrix}
    \bm I_{n_1} & \bm 0^\top & \cdots & \bm 0^\top & \bm 0^\top & \bm 0^\top & \cdots & \bm 0^\top & \bm 0^\top \\
    \bm 0^\top & \bm I_{n_2} & \cdots & \bm 0^\top & \bm 0^\top & \bm 0^\top & \cdots & \bm 0^\top & \bm 0^\top \\
    \vdots & \vdots & \ddots & \vdots & \vdots & \vdots & \ddots & \vdots & \vdots \\
    \bm 0^\top & \bm 0^\top & \cdots & \bm I_{n_K} & \bm 0^\top & \bm 0^\top & \cdots & \bm 0^\top & \bm 0^\top 
\end{bmatrix}^\top_{K\times(2K+1)}\]
\[\bm C = [\bm C_1, \bm C_2] = \begin{bmatrix}
    \bm 0^\top & \bm 0^\top & \cdots & \bm 0^\top & -\bm I_p & -\bm I_p & \cdots & -\bm I_p & -\bm G^\top \\
    \bm y_1^\top & \bm y_2^\top & \cdots & \bm y_K^\top & \bm 0^\top & \bm 0^\top & \cdots & \bm 0^\top & \bm 0^\top
\end{bmatrix}^\top_{2\times(2K+1)}.\]

The fundamental concept of the Gaussian back substitution method is that in each iteration, variables are initially updated through forward prediction to yield a set of approximate solutions. Subsequently, in a reverse back-substitution sequence, the approximate solutions are systematically corrected by leveraging the most recently updated variable values within the same iteration step. To distinguish them, we denote the approximate solutions as $(\tilde{\bm v}_2, \tilde{\bm v}_3)$, generated by (\ref{iteration}) , and the corrected solutions as $(\bm v_2, \bm v_3)$. Mathematically, the Gaussian back-substitution method can be characterized as a structured linear back substitution procedure described below.
\begin{equation}
    \label{gaussian}
    \begin{bmatrix}
        \bm v_2^{t+1} \\
        \bm v_3^{t+1}
    \end{bmatrix} = \begin{bmatrix}
        \bm v_2^{t} \\
        \bm v_3^{t}
    \end{bmatrix} - \begin{bmatrix}
        \nu \bm I & -\nu(\bm B^\top \bm B)^{-1} \bm B^\top \bm C \\
        \bm 0 & \nu \bm I
    \end{bmatrix} \begin{bmatrix}
        \bm v_2^t - \tilde{\bm v}_2^t \\
        \bm v_3^t - \tilde{\bm v}_3^t
    \end{bmatrix}
\end{equation}
where $\nu \in (0,1)$. From the definition of the constraint matrix $\bm{B}$, we know that $\bm{B}^{\top}\bm{B}$ is an identity matrix. This effectively avoids calculating the inverse of a large-scale matrix and significantly reduces the computational burden. And we have
\[
(\bm B^\top \bm B)^{-1}\bm B^\top \bm C = \bm B^\top \bm C = \begin{bmatrix}
    \bm 0 & \bm y_1 \\
    \bm 0 & \bm y_2 \\
    \vdots & \vdots \\
    \bm 0 & \bm y_k 
\end{bmatrix}.
\]

Then according to (\ref{gaussian}), it can be derived 
\begin{subequations}
    \label{gaussian-total}
    \begin{align}
    \bm \xi_k^{t+1} & = (1 - \nu)\xi_k^t + \nu \tilde{\bm \xi}_k^t + \nu \bm y_k (\beta_0^t - \tilde{\beta}_0^t), k = 1,2, \cdots, K; \label{gaussian-xi}\\
    \bm \beta^{t+1} & = (1 - \nu)\bm \beta^t + \nu \tilde{\bm \beta}^t, k = 1,2, \cdots, K; \label{gaussian-beta} \\
    \beta_0^{t+1} & = (1 - \nu)\beta_0^t + \nu \tilde{\beta}_0^t. \label{gaussian-beta0}
    \end{align}
\end{subequations}
where $\tilde{\bm \xi}_k^t, \tilde{\bm \beta}^t$, and $\tilde{\beta}_0^t$ are generated by (\ref{iteration}). Since there is no need to make corrections to $\tilde{\bm v}_1^{t+1}$, we have $\bm \beta_k^{t+1} = \tilde{\bm \beta}_k^t$ and $\bm \theta^{t+1} = \tilde{\bm \theta}^t$.

After discussing the updates of each sub problem and the Gaussian back substitution method, we summarize the entire algorithm. Given that the central machine does not store any data, we update the variables $\bm{\beta}$, $\bm{\theta}$, and $\bm{e}$, which do not require data loading, on the central machine. In contrast, the variables $\bm{\xi}_k$, $\bm{\beta}_k$, $\bm{b}_k$, and $\bm{d}_k$, which need to use the data set, are updated on the local machines. Meanwhile, to ensure the convergence of the algorithm, the variables $\bm{\xi}_k$, $\bm{\beta}$, and $\beta_0$ also need to be corrected according to (\ref{gaussian-total}). It should be noted that the updates of the dual variables $\bm{b}_k^{t + 1}$ and $\bm{d}_k^{t + 1}$ rely on $\bm{\beta}^{t + 1}$ and $\beta_0^{t + 1}$. This implies that if we follow the conventional update order of first updating the primal variables and then the dual variables, an additional round of communication between the central machine and the local machines will be incurred, thus increasing the communication cost. Fortunately, the update order of the dual variables has no impact on the accuracy and convergence of the algorithm. Therefore, we choose to update the dual variables $\bm{b}_k^t$ and $\bm{d}_k^t$ from the $t$-th iteration at the beginning of the $(t + 1)$-th iteration. That is, the update of the dual variables lags one step behind that of the primal variables. The updated $\bm{b}_k^t$ and $\bm{d}_k^t$ after adjustment can be expressed as follows:
\begin{subequations}
\label{dual}
\begin{align}
    \bm b_k^{t} & = \bm b_k^{t-1} - \mu(\tilde{\bm \xi_k}^{t-1} - \bm 1_k + \bar{\bm X_k}\bm \beta_k^t + \bm y_k \tilde{\beta_0}^{t-1}) \label{dual-b}
    \\
    \bm d_k^{t} & = \bm d_k^{t-1} - \mu(\bm \beta_k^t - \tilde{\bm \beta} ^{t-1}) \label{dual-d}
\end{align} 
\end{subequations}
Although updating the dual variable $\bm e$ does not incur additional communication costs, for the sake of algorithmic tidiness, we also choose to adjust $\bm e^t$.
\begin{equation}
    \label{dual-e}
    \bm e^t = \bm e^{t-1} - \mu(\bm \theta^t - \bm G \tilde{\bm \beta}^{t-1})
\end{equation}
We use Figure \ref{fig1} to illustrate the specific operations of the proposed algorithm, and the more detailed content of the algorithm is summarized in Algorithm \ref{alg1}.
\begin{figure}[H]
    \makebox[\textwidth][c]{\includegraphics[width=1.2\linewidth]{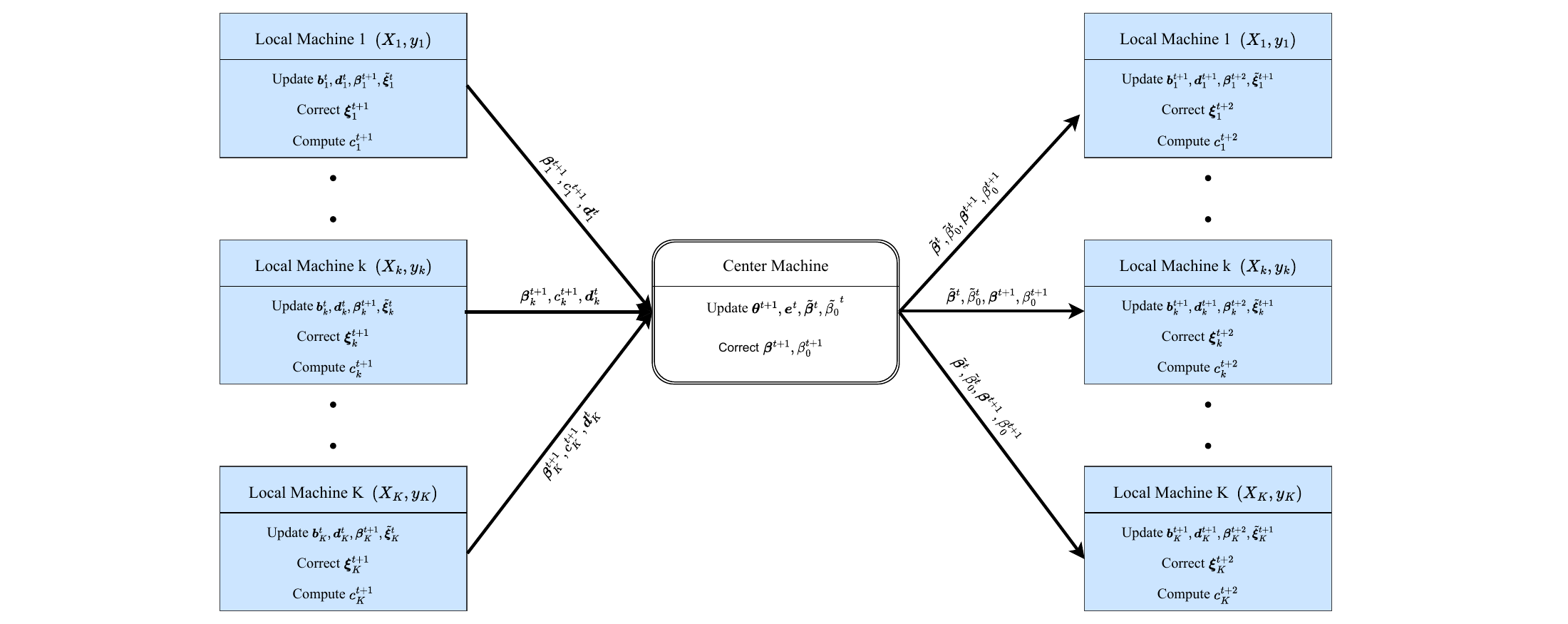}}
    \caption{Schematic diagram of the implementation of parallel ADMM algorithm}
    \label{fig1}
\end{figure}

\begin{algorithm}\small
\caption{\small{The dpADMM for solving the CR-SVMs}}
\label{alg1}
\begin{algorithmic}
\STATE {\textbf{Input:} $\bullet$ Central machine: $\mu, K, \lambda_1, \lambda_2,  \bm \beta^0, \bm \theta^0$ and $\bm e^0$.\\
\qquad \  \ \ \ \ $\bullet$ The $k$-th local machine:  $\boldsymbol{X}_k,\boldsymbol{y}_k$; $\mu, {\beta_0}_k^0, \bm \xi_k^0, \bm \beta_k^0$, $b_k^0$, $\bm d_k^0$ and selectable parameters $\tau \in (0,1]$ and $\delta>0$ for loss function.}
\STATE {\textbf{Output:} the total number of iterations $T$,  $\boldsymbol{\beta}^T$ and $\beta_0^T$. }
\STATE {\textbf{while} not converged \textbf{do}}

\STATE {\ \textbf{Local machines}: \ \  for $k =1 ,2, \dots, K$ (in parallel) \\
\qquad \qquad \qquad  \qquad \quad \ 1. Receive $\tilde{\bm \beta}^{t-1}, \tilde{\beta}_0^{t-1}$ and $\bm \beta^t, \beta_0^t$ transmitted by the central machine, \\
\qquad \qquad \qquad  \qquad \quad \ 2. Update $\bm b_k^{t}$ by (\ref{dual-b}) and $\bm d_k^{t}$ by (\ref{dual-d}),\\
\qquad \qquad \qquad  \qquad \quad \ 3. Update $\bm \beta_k^{t+1}$ by (\ref{betak-result}),\\
\qquad \qquad \qquad  \qquad \quad \ 4. Update $\tilde{\bm \xi}_k^t$ by Table \ref{tab4} according to different loss functions, then correct $\bm \xi_k ^{t + 1}$ by (\ref{gaussian-xi})\\
\qquad \qquad \qquad  \qquad \quad \ 5. Compute $c_k^{t+1}$ by (\ref{beta0k}), \\
\qquad \qquad \qquad  \qquad \quad \ 6. Send $c_k^{t+1}, \bm \beta_k^{t+1}$ and $\bm d_k^t$  to the central machine.

\STATE {\ \textbf{Central machine}: 1. Receive $\bm \beta_k^{t+1}, \bm d_k^t$ and $c_k^{t+1}$ transmitted by local machines $k$.\\ 
\qquad \qquad \qquad  \qquad \quad \ 2. Update $\bm \theta^{t+1}$ by Table \ref{tab5} according to different models, \\
\qquad \qquad \qquad  \qquad \quad \ 3. Update $\bm e^t$ by (\ref{dual-e}),\\
\qquad \qquad \qquad  \qquad \quad \ 4. Update $\tilde{\bm \beta}^t$ by Table \ref{tab3} according to different models, then correct $\bm \beta^{t+1}$ by (\ref{gaussian-beta}),\\
\qquad \qquad \qquad  \qquad \quad \ 5. Update $\tilde{\beta_0}^t$ by adding $K$ $c_k^{t+1}$, then correct $\beta_0^{t+1}$ by (\ref{gaussian-beta0}), \\
\qquad \qquad \qquad  \qquad \quad \ 6. Send $\tilde{\bm \beta}^t, \tilde{\beta_0}^t$ and $\bm \beta^{t+1}, \beta_0^{t+1}$ to the local machines.
}
}
\STATE {\textbf{end while}}
\STATE {\textbf{return} solution}.
\end{algorithmic}
\end{algorithm}
\subsection{Convergence and computational cost analysis}\label{sec3.4}
\vspace{-1em}
\quad \ The convergence analysis of the three-block ADMM algorithm modified by Gaussian back substitution has been comprehensively explored, as demonstrated in the works of \cite{He2012AlternatingDM, Wu2025ParallelAA}, among others. Here, we directly present the convergence conclusion of Algorithm \ref{alg1}. For the detailed proof, please refer to the  first section of supporting materials.
\begin{thm}
    \label{thm1}
    Let ${\bm h}^t = \left(\bm \beta^t, \beta_0^t, \{\bm \xi_k^t\}_{k=1}^K, \{\bm \beta_k^t\}_{k=1}^K, \bm \theta^t, \{\bm b_k^t\}_{k=1}^K, \{\bm d_k^t\}_{k=1}^K, \bm e^t \right)$ be generated by Algorithm \ref{alg1} with an initial feasible solution ${\bm h^0}$. The sequence ${\bm g}^t = \left(\{\bm \xi_k^t\}_{k=1}^K, \bm \beta^t, \beta_0^t,  \{\bm b_k^t\}_{k=1}^K, \{\bm d_k^t\}_{k=1}^K, \bm e^t \right)$ converge to ${\bm g}^*$, where ${\bm g}^* = \left(\{\bm \xi_k^*\}_{k=1}^K, \bm \beta^*, \beta_0^*,  \{\bm b_k^*\}_{k=1}^K, \{\bm d_k^*\}_{k=1}^K, \bm e^* \right)$ is an optimal solution point of (\ref{distributed structured sparse SVM}). For any positive integer \(T > 0\), the $O(1/T)$ convergence rate in a non-ergodic sense can also be obtained, i.e.,
    \begin{equation}
        \label{convergence}
        \|{\bm g}^T - {\bm g}^{T+1}\|_{\bm H}^2 \leq \frac{1}{c_0(T+1)}\|{\bm g}^0 - {\bm g}^*\|_{\bm H}^2
    \end{equation}
    where $c_0>0$ is a constant, 
   $\bm H$  is a positive definite matrix and its specific form can be found in the proof of this theorem. 
\end{thm}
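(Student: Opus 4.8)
The plan is to follow the prediction--correction analysis of \cite{He2012AlternatingDM} applied to the three-block reformulation (\ref{three-block}), viewing the raw ADMM sweep (\ref{iteration}) as the \emph{predictor} that outputs $\tilde{\bm g}^t$ and the Gaussian back-substitution (\ref{gaussian}) as the \emph{corrector} that produces $\bm g^{t+1}$. Throughout I would treat each $\varphi_i$ as convex: for the EN, SFL and SGL terms this holds directly, while for the SCAD/MCP cases it is the LLA surrogate $\sum_j P_{\lambda_1}^{'}(|\beta_j^t|)|\beta_j|$ in (\ref{lla-beta}) that plays the role of $\varphi_3$, so the same machinery applies inside each outer LLA loop. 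The measure sequence deliberately omits $\bm v_1 = (\{\bm\beta_k\},\bm\theta)$ because these variables are updated first in (\ref{iteration-betak})--(\ref{iteration-theta}) and never corrected; this is exactly what lets the contraction be stated in $\bm g = (\{\bm\xi_k\},\bm\beta,\beta_0,\{\bm b_k\},\{\bm d_k\},\bm e)$ only.

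The first step is the variational-inequality (VI) characterization. Writing $F$ for the affine monotone operator induced by the coupling constraint $\bm A\bm v_1+\bm B\bm v_2+\bm C\bm v_3=\bm z$ together with the dual variables, optimality of (\ref{three-block}) is equivalent to $\varphi(\bm u)-\varphi(\bm u^*)+(\bm w-\bm w^*)^\top F(\bm w^*)\ge 0$ for all feasible $\bm w$. Collecting the first-order optimality conditions of the subproblems (\ref{iteration-betak})--(\ref{iteration-beta0}) and the dual recursions (\ref{iteration-b})--(\ref{iteration-e}), I would then show that the predictor satisfies $\varphi(\bm u)-\varphi(\tilde{\bm u}^t)+(\bm w-\tilde{\bm w}^t)^\top F(\tilde{\bm w}^t)\ge(\bm g-\tilde{\bm g}^t)^\top\bm Q(\bm g^t-\tilde{\bm g}^t)$ for an explicit matrix $\bm Q$ assembled from $\mu$, the blocks of $\bm B,\bm C$, and (in the $\bm G=\bm F$ case) the linearization parameter $\eta$ contributed by (\ref{linearization}).

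Next I would identify the corrector. From (\ref{gaussian}) the correction reads $\bm g^{t+1}=\bm g^t-\bm M(\bm g^t-\tilde{\bm g}^t)$ with upper-triangular $\bm M$ whose sole off-diagonal block is $-\nu(\bm B^\top\bm B)^{-1}\bm B^\top\bm C=-\nu\bm B^\top\bm C$; because $\bm B^\top\bm B=\bm I$ this is explicit and needs no large inverse. The crux is to exhibit a symmetric positive definite $\bm H$ with $\bm Q=\bm H\bm M$ and to verify $\bm G_0:=\bm Q^\top+\bm Q-\bm M^\top\bm H\bm M\succ 0$. Here the eigenvalue bound $\eta>\lambda_{\max}(K\bm I_p+\bm F^\top\bm F)=K+4$ is precisely what renders the block arising from the linearized $\bm\beta$-update positive definite, while $\nu\in(0,1)$ controls the corrector block; this positivity verification is the first main technical obstacle, and it hinges throughout on the structural fact $\bm B^\top\bm B=\bm I$.

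Finally, the identity $\bm g^t-\bm g^{t+1}=\bm M(\bm g^t-\tilde{\bm g}^t)$, combined with the predictor VI evaluated at $\bm w=\bm w^*$ and the relation $\bm Q=\bm H\bm M$, yields the contraction $\|\bm g^{t+1}-\bm g^*\|_{\bm H}^2\le\|\bm g^t-\bm g^*\|_{\bm H}^2-c_0\|\bm g^t-\bm g^{t+1}\|_{\bm H}^2$ for some $c_0>0$ fixed by $\bm G_0$ and $\nu$; summing over $t$ gives $\bm g^t\to\bm g^*$ and $\sum_{t\ge 0}\|\bm g^t-\bm g^{t+1}\|_{\bm H}^2\le c_0^{-1}\|\bm g^0-\bm g^*\|_{\bm H}^2$. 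The decisive and hardest step for the \emph{non-ergodic} rate is to prove that $\|\bm g^t-\bm g^{t+1}\|_{\bm H}^2$ is monotonically non-increasing; I would obtain this by writing the predictor VI at two consecutive indices, subtracting, invoking monotonicity of $F$, and using $\bm Q=\bm H\bm M$ to recast the cross terms as $\bm H$-norms. Monotonicity then forces $(T+1)\|\bm g^T-\bm g^{T+1}\|_{\bm H}^2\le\sum_{t=0}^{T}\|\bm g^t-\bm g^{t+1}\|_{\bm H}^2\le c_0^{-1}\|\bm g^0-\bm g^*\|_{\bm H}^2$, which upon dividing by $T+1$ is exactly (\ref{convergence}). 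I expect the positivity check and this monotonicity argument to be the two places where the bulk of the work lies.
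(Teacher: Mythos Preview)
Your plan follows the paper's proof closely: both cast Algorithm~\ref{alg1} in the He--Yuan prediction--correction framework, derive the predictor variational inequality with a matrix $\bm Q$, verify $\bm Q=\bm H\bm M$ with $\bm H\succ 0$ and $\bm G_0=(\bm Q^\top+\bm Q)-\bm M^\top\bm H\bm M\succ 0$, obtain the $\bm H$-norm contraction, and then pair summability with monotonicity of $\|\bm g^t-\bm g^{t+1}\|_{\bm H}^2$ to reach the non-ergodic rate~(\ref{convergence}).

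One technical point where your description departs from the paper and would cause trouble if taken literally: the paper's $\bm M$ is \emph{not} block upper-triangular. The paper introduces an artificial dual predictor $\tilde{\bm d}^t=\bm d^t-\mu(\bm A\tilde{\bm v}_1^t+\bm B\bm v_2^t+\bm C\bm v_3^t-\bm z)$ built with the \emph{old} $\bm v_2^t,\bm v_3^t$ (not the actual dual update from (\ref{iteration-b})--(\ref{iteration-e})), so that the dual row of $\bm M$ reads $(-\mu\bm B,\,-\mu\bm C,\,\bm I)$ and the corrector then recovers the true $\bm d^{t+1}$. This choice is exactly what makes the $\tilde{\bm v}_1$-optimality condition collapse to $-\bm A^\top\tilde{\bm d}^t$ with no residual, which is the mechanism by which $\bm v_1=(\{\bm\beta_k\},\bm\theta)$ drops out of $\bm g$. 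If instead you take $\tilde{\bm d}^t$ equal to the actual dual iterate---which is what your upper-triangular $\bm M$ with ``sole off-diagonal block'' $-\nu\bm B^\top\bm C$ implies---then the $\bm v_1$-row of the predictor VI picks up cross terms $\mu\bm A^\top\bm B(\bm v_2^t-\tilde{\bm v}_2^t)+\mu\bm A^\top\bm C(\bm v_3^t-\tilde{\bm v}_3^t)$ multiplied by $(\bm v_1-\tilde{\bm v}_1^t)$, and since $\bm v_1\notin\bm g$ these cannot be absorbed into $(\bm g-\tilde{\bm g}^t)^\top\bm Q(\bm g^t-\tilde{\bm g}^t)$. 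Once you adopt the paper's dual predictor, the rest of your outline goes through verbatim.
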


In the proof of this theorem, we discover that the sequences $\{\| \bm{g}^{t} - \bm{g}^* \|_{\bm{H}}^2\}$ and $\{\| \bm{g}^{t+1} - \bm{g}^{t} \|_{\bm{H}}^2\}$ are monotonically non-increasing, and $\sum\limits_{t=0}^{\infty} c_0 \| (\bm{g}^{t+1} - {\bm{g}}^t) \|_{\bm{H}}^2 \leq \| \bm{g}^0 - \bm{g}^* \|_{\bm{H}}^2< +\infty$. From the lemma 1.1 in \cite{Deng2013ParallelMA}, we can draw the following conclusion.
\begin{cor}
    The improved sublinear convergence rate  in a non-ergodic sense can  be obtained, that is,
    \begin{align}
        \|\bm{g}^{t+1} - \bm{g}^{t} \|_{\bm{H}}^2 = o(1/t).
    \end{align}
\end{cor}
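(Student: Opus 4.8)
The plan is to reduce the claim to a purely elementary statement about scalar sequences, namely that a non-negative, monotonically non-increasing, summable sequence must be $o(1/t)$, and then invoke Lemma 1.1 of \cite{Deng2013ParallelMA}. All the genuine analytic work has already been carried out in the proof of Theorem \ref{thm1}; what remains is to package two of its byproducts into the hypotheses of that lemma.

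First I would set $a_t := \| \bm{g}^{t+1} - \bm{g}^t \|_{\bm{H}}^2$ and record three facts. Since $\bm{H}$ is positive definite, $a_t \geq 0$. From the analysis in the proof of Theorem \ref{thm1}, the sequence $\{a_t\}$ is monotonically non-increasing. Finally, summing the descent inequality established there yields $\sum_{t=0}^{\infty} c_0\, a_t \leq \| \bm{g}^0 - \bm{g}^* \|_{\bm{H}}^2 < +\infty$ with $c_0 > 0$, so $\{a_t\}$ is summable. These are exactly the two properties flagged in the paragraph preceding the statement.

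Next I would run the standard doubling argument underlying the cited lemma. Because the series converges, its tails vanish, i.e. $\sum_{k = \lceil t/2 \rceil}^{\infty} a_k \to 0$ as $t \to \infty$. Using monotonicity, every term in the block $k \in \{\lceil t/2 \rceil, \ldots, t\}$ is at least $a_t$, so that
\[
\left\lfloor \tfrac{t}{2} \right\rfloor a_t \ \leq\ \sum_{k = \lceil t/2 \rceil}^{t} a_k \ \leq\ \sum_{k = \lceil t/2 \rceil}^{\infty} a_k \ \longrightarrow\ 0 .
\]
Hence $t\, a_t \to 0$, which is precisely $a_t = o(1/t)$, i.e. $\| \bm{g}^{t+1} - \bm{g}^t \|_{\bm{H}}^2 = o(1/t)$.

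The main point to get right is not a hard estimate but the bookkeeping linking this corollary to Theorem \ref{thm1}: I must ensure that both the monotonicity of $\{ \| \bm{g}^{t+1} - \bm{g}^t \|_{\bm{H}}^2 \}$ and the summability $\sum_t c_0 \| \bm{g}^{t+1} - \bm{g}^t \|_{\bm{H}}^2 < +\infty$ are expressed in the same $\bm{H}$-weighted norm appearing in \eqref{convergence}. Both follow from the monotonicity/variational inequalities derived in the proof of Theorem \ref{thm1}, so once those are in hand the corollary is immediate and there is no genuine additional obstacle beyond a direct appeal to \cite{Deng2013ParallelMA}.
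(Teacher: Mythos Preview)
Your proposal is correct and follows essentially the same route as the paper: identify the monotonicity of $\{\|\bm g^{t+1}-\bm g^t\|_{\bm H}^2\}$ and its summability from the proof of Theorem~\ref{thm1}, then invoke Lemma~1.1 of \cite{Deng2013ParallelMA}. The only difference is that you additionally spell out the doubling argument behind that lemma, whereas the paper simply cites it.
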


Next, we will provide the complexity of algorithm iteration, which is detailed  in the second section of supporting materials.
   \begin{thm}\label{Th2} 
The overall computational complexity of Algorithm \ref{alg1}  is  \begin{equation}\label{comp2}
\begin{cases}
  O(n_{\max}p^2) +  O(n_{\max}p) \times T  & \text{If } n_{\max} > p,\\
O(n_{\max}^2p) + O(p^2) \times T  & \text{otherwise},
\end{cases}
\end{equation}
where $n_{\max} = \max\{n_k \}_{k=1}^{K}$ and $T$  is the total number of iterations in the ADMM algorithm.
\end{thm}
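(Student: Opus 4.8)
The plan is to decompose the total cost of Algorithm \ref{alg1} into a one-time setup cost incurred before the iteration loop and a per-iteration cost incurred $T$ times, then take the per-machine maximum to reflect the parallel (wall-clock) execution and retain the dominant term in each dimensional regime.

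First I would isolate the only genuinely expensive precomputation, namely the matrix inverse appearing in the $\bm\beta_k$-update \eqref{betak-result}. Crucially, $(\bar{\bm X}_k^\top\bar{\bm X}_k+\bm I_p)^{-1}$ does not depend on the iteration index $t$, so it is factored once and reused. When $n_{\max}>p$ I would form the $p\times p$ Gram matrix $\bar{\bm X}_k^\top\bar{\bm X}_k$ at cost $O(n_k p^2)$ and invert it at cost $O(p^3)$; since $n_{\max}>p$ forces $n_{\max}p^2>p^3$, the setup is $O(n_{\max}p^2)$. When $n_{\max}\le p$ I would instead invoke the Woodbury identity quoted after \eqref{betak-result}, precomputing only the $n_k\times n_k$ inverse $(\bm I_{n_k}+\bm X_k\bar{\bm X}_k^\top)^{-1}$ at cost $O(n_k^2 p)+O(n_k^3)=O(n_{\max}^2 p)$. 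These two branches supply the leading terms $O(n_{\max}p^2)$ and $O(n_{\max}^2 p)$.

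Next I would bound the per-iteration cost by walking through each subproblem. After the inverse is cached, the $\bm\beta_k$-update reduces to forming its right-hand side via the products $\bar{\bm X}_k^\top(\cdot)$ and applying the cached factor: $O(n_k p)+O(p^2)$ when $n_{\max}>p$, and $O(n_k p+n_k^2)$ via Woodbury when $n_{\max}\le p$. The $\bm\xi_k$-update, the computation of $c_k$ in \eqref{beta0k}, and the dual update \eqref{dual-b} each require one product $\bar{\bm X}_k\bm\beta_k$, costing $O(n_k p)$, followed only by $O(n_k)$ element-wise proximal or vector work; the Gaussian back-substitution \eqref{gaussian-xi}--\eqref{gaussian-beta0} and the remaining dual updates are pure vector operations. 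On the central machine, the $\bm\theta$-update (Table \ref{tab3}), the $\bm\beta$-update (Table \ref{tab6}), and the $\bm e$-update \eqref{dual-e} are all $O(p)$: the closed-form (group) soft-thresholding operators are linear in $p$, and the only matrices involved are the bidiagonal $\bm F$ and the tridiagonal $\bm I_p,\ \bm F^\top\bm F$, whose action on a vector is $O(p)$. Summing over subproblems and taking the maximum over the $K$ machines run in parallel, the dominant per-iteration cost is $O(n_{\max}p)$ when $n_{\max}>p$ (where $n_{\max}p$ absorbs the $O(p^2)$ inverse application), and is bounded by $O(p^2)$ when $n_{\max}\le p$ (where $n_k^2$ and $n_k p$ are both at most $p^2$). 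Multiplying by $T$ and adding the setup term yields the two branches of \eqref{comp2}.

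The main obstacle I anticipate is the $\bm\beta_k$-subproblem in the high-dimensional regime $p>n_{\max}$: a naive inversion of the $p\times p$ system would cost $O(p^3)$ per factorization and destroy the claimed bound, so the argument hinges on (i) the Woodbury reduction to an $n_k\times n_k$ solve and (ii) the amortization that places this cost in the one-time term rather than per iteration. A secondary point that must be checked carefully is that \emph{every} regularization-dependent update stays $O(p)$ — including the SFL linearization \eqref{beta-result-2} and the non-convex SCAD/MCP variants \eqref{lla-beta-result-1}--\eqref{lla-beta-result-2}, which differ from the $\ell_1$ case only by the scalar thresholds $P_{\lambda_1}'(|\beta_j^t|)$ and by sparse products with $\bm F$. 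This is precisely what makes the complexity independent of the chosen loss and regularizer, matching the universality asserted for the framework.
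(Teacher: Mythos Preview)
Your proposal is correct and follows essentially the same route as the paper: isolate the one-time matrix-inversion cost (direct when $n_{\max}>p$, Woodbury when $n_{\max}\le p$) as the leading setup term, then step through every subproblem to show that the per-iteration work is dominated by $\max\{O(n_{\max}p),\,O(p^2)\}$ --- the $\bm\beta_k$-solve via the cached factor, the shared product $\bar{\bm X}_k\bm\beta_k^{t+1}$ driving the $\bm\xi_k$, $c_k$, and dual updates, and the $O(p)$ central-machine steps exploiting the sparse structure of $\bm F$ and $\bm F^\top\bm F$. Your treatment is in fact slightly more explicit than the paper's (e.g., spelling out why $O(p^2)$ is absorbed by $O(n_{\max}p)$ in the first regime, and why the Woodbury application stays within $O(p^2)$ in the second), but the decomposition and the bookkeeping are the same.
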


This theorem indicates that adding local machines can effectively reduce the computational complexity of the algorithm. This is because increasing the number of local machines reduces the value of $n_{\max}$. Moreover, this complexity is independent of the loss function and regularization term, demonstrating the  universality of algorithm. 
\section{Numerical results}\label{sec4}
\quad \ To comprehensively evaluate the overall performance of the proposed parallel optimization framework and its algorithms, this chapter will conduct experiments across two dimensions: synthetic data and real-world data. The synthetic data experiments are designed to rigorously verify the algorithm’s convergence, computational efficiency, and its ability to learn from predefined data structures in a controlled environment. The real-world data experiments focus on the highly structurally complex task of music genre classification, aiming to test the algorithm’s generalization capability in practical scenarios and to deeply explore the musicological insights embedded in its analytical results.

In general, all the parameters for algorithms $\mu, \lambda_1$ and $\lambda_2$ need to be chosen by the cross-validation (CV) method, while leading to the high computational burden. To reduce the computational cost,  we  suggest that $\mu$ is selected from the set $\{0.01,0.1,1\}$. Moreover, to choose the optimal values for the regularization parameters $\lambda_1, \lambda_2$, we adopt the method proposed in \cite{Zhang2016VariableSF}, which minimizes the SVM information criterion (SVMIC). SVMIC is defined as:
\begin{align}
\text{SVMIC}(\bm \lambda) = \sum_{i = 1}^{n} \xi_n + |S_{\lambda_1, \lambda_2}| \log n + 2 \gamma \left(\begin{matrix}
p \\
|S_{\lambda_1, \lambda_2}|
\end{matrix}\right).
\end{align}
Here, $\gamma \in [0,1]$  and $|S_{\lambda_1, \lambda_2}|$ is the number of non-zero coordinates in estimator. All the experiments in this paper prove that the selection of these parameters is very effective. The iteration initial values of the prime and dual variables are both set to be $\bm 0$. 

In order to accelerate the convergence speed of the proposed parallel algorithm, we adopt the following acceleration method. One is the effective and simple method of adjusting $\mu$ proposed by \cite{Boyd2010DistributedOA}, that is
\begin{align}\label{sat}
\mu^{k+1}= \left\{ \begin{array}{l}
\mu^k*c_2, \  \text{if} \ c_1 \times \text{primal residual} < \text{dual residual},\\
\mu^k/c_2,\ \ \ \text{if} \ \text{primal residual} > c_1 \times \text{dual residual} ,\\
\mu^k, \qquad \ \text{otherwise},
\end{array} \right.
\end{align}
where $c_1=10, c_2=2$, the definitions of the above two residuals can be found in  Chapter 3.3 of \cite{Boyd2010DistributedOA}.  This self-adaptive tuning method has been used by \cite{Boyd2010DistributedOA}, and they also claimed that the convergence of ADMM can be guaranteed  if $\mu^k$ becomes fixed after a finite number of iterations. 
The proposed MLADMM algorithm is iterated until some stopping criterion is satisfied. We use the stopping criterion from the Chapter  3.3.1 of \cite{Boyd2010DistributedOA}. All experiments were conducted on a computer equipped with an AMD Ryzen 9 7950X 16-core processor (clocked at 4.50 GHz) and 32 GB of memory, using the R programming language. We put the R codes for implementing the proposed parallel algorithm and reproducing our experiments on  \url{https://github.com/xfwu1016/PADMM-for-Svms}. 
\subsection{Synthetic Data}\label{sec41}
\vspace{-1em}
\quad \
In this subsection,we focus on large-scale synthetic data experiments to systematically examine the core computational properties of the algorithm. The synthetic data are assumed to be generated from two normal distributions, $N(\mu_+, \Sigma)$ and $N(\mu_-, \Sigma)$, where the mean vectors are defined as $\mu_+ = (1, \ldots, 1, 0, \ldots, 0)^\top$ with the first 10 entries being 1 and the remaining entries 0, and $\mu_- = (-1, \ldots, -1, 0, \ldots, 0)^\top$ with the first 10 entries being -1 and the rest 0. The covariance matrix takes the block-diagonal form:
\begin{equation}
\boldsymbol{\Sigma}=\left[\begin{array}{*{20}{c}}
\bm {\Sigma}_{10}^*&  \bm0 \\ 
\bm 0 &\bm I_{(p-10)}
\end{array} \right].
\end{equation}
where $\Sigma^*_{10}$ is a $10 \times 10$ matrix whose diagonal entries are all 1 and off-diagonal entries are $\rho < 1$. Note that a larger $\rho$ indicates stronger correlation among the first 10 features. This data-generation scheme simulates a "sparse block structure" commonly encountered in high-dimensional classification problems: only the first 10 features truly contribute to the classification decision, and they exhibit a certain degree of correlation, while the remaining features are irrelevant noise. In our experiments, we fix $\rho = 0.5$. To further evaluate the robustness of the proposed algorithm, we inject label noise into the training data following the approach described in \cite{Wu2025MultiLA}. The labels of noise points are randomly selected from $\{+, -\}$ with equal probability, and their positions are generated from a Gaussian distribution $N(\mu_n, \Sigma_n)$ with $\mu_n = (0,0,\ldots,0)^\top$ and $\Sigma_n = \Sigma$. These noisy points primarily affect the labeling of samples near the decision boundary. The noise level is controlled by the proportion $\alpha\;(\%)$ of noisy samples in the training set. In this study, we set $\alpha = 20\%$. 
  
We employ our distributed parallel algorithm to implement EN‑SVM, SFL‑SVM, and SGL‑SVM. These implementations are then compared with existing non‑parallel algorithms \cite{Liang2024LinearizedAD, Wu2025MultiLA} for the respective models. Given the limited existing research on algorithms for the SGL‑SVM model, a direct comparison with a mature, established benchmark is currently infeasible. To address this, the performance evaluation in this study is conducted via a self‑comparison of the proposed algorithm under different configurations. Specifically, we compare its performance by varying the number of local machines $K$, setting it to $K = 1$ and $K = 5$, to assess the algorithm’s parallelization efficiency and scalability. For ease of reference, the distributed parallel ADMM algorithm proposed in this paper is denoted as DPADMM‑G, with the number of local machines $K=5$. The non‑distributed algorithm developed by \cite{Liang2024LinearizedAD} for the EN‑SVM is denoted as LADMM, while the non‑distributed algorithm proposed by \cite{Wu2025MultiLA} for the SFL‑SVM is denoted as MLADMM. We set $n=10000$ and $p=20000$.
\begin{table}[h]
    \centering
    \caption{Comparison of different algorithm performance for three SS-SVM Models }
    \renewcommand{\arraystretch}{1.5}
    \resizebox{1\columnwidth}{!}{
    \begin{tabular}{lcccccc}
    \hline
     & \multicolumn{2}{c}{EN-SVM} & \multicolumn{2}{c}{SFL-SVM} & \multicolumn{2}{c}{SGL-SVM}  \\
    \cmidrule(lr){2-3}\cmidrule(lr){4-5}\cmidrule(lr){6-7}
    Algorithm & LADMM & DPADMM-G (K=5) & MLADMM & DPADMM-G (K=5) & DPADMM-G (K=1) & DPADMM-G (K=5) \\
    \hline
    CAR & 0.920 & \textbf{0.931} & 0.919 & \textbf{0.924} & \textbf{0.923} & 0.906 \\
    CT & 66.37 & \textbf{20.98} & 71.33 & \textbf{25.36} & 78.36 & \textbf{23.25} \\
    NI & \textbf{49.17} & 68.25 & \textbf{57.85} & 72.08 & \textbf{55.36} & 73.68 \\
    NTSF & 10.02 & 10.03 & 10.05 & 10.04 & 10.02 & 10.07 \\
    \hline
    \end{tabular}}
    \label{tab12}
\end{table}

Table \ref{tab12} records the averaged results of 100 runs. Abbreviations in the table: CT for computation time, CAR for classification accuracy rate, NI for the number of iterations, and NTSF for the number of true selected features. As shown in Table \ref{tab12}, on the EN‑SVM and SFL‑SVM tasks, DPADMM‑G($K = 5$) slightly outperforms the corresponding non‑distributed algorithms in terms of CAR, while significantly reducing the CT by approximately $68\%–70\%$, demonstrating the efficiency of distributed computing. Although the NI of DPADMM‑G increases, its overall solving speed is still greatly improved because each iteration can be executed in parallel across multiple machines. In the SGL‑SVM task, compared with the $K = 1$ configuration, DPADMM‑G with $K = 5$ reduces the computation time by about $70\%$, further validating the favorable parallel scalability of the algorithm; its classification accuracy only experiences a slight decline, indicating that distributed computation does not compromise model accuracy under appropriate configurations. Regarding the accuracy of feature selection, the NTSF (Number of True Selected Features) values for all algorithms consistently fall within the range of $10.02–10.07$, aligning closely with the true number of effective features (10).

Given the limited research on distributed parallel algorithms for SS-SVM, it is challenging to identify a directly comparable method that addresses the same exact problem. To establish a meaningful performance benchmark, we adopt and adapt two representative existing studies. Following the distributed framework proposed by \cite{Guan2020AnEP} and denoted as QPADM-salck for non-convex SVMs, we apply it to solve an SVM model with the SCAD regularization. Similarly, leveraging two distributed algorithms developed by \cite{Wu2025AUC}, named QPADM-slack(GB) and M-QPADM-slack(GB), for high-dimensional quantile regression and classification, we adapt it to solve an SVM model with the SCAD-Pinball SVM. The performance of these two adapted baselines is then compared with that of our proposed distributed parallel algorithm, which is designed to solve a SFL-SVM with a SCAD variant. The number of training samples $n$ is set to $200,000$ and $500,000$, the number of testing samples $m$ is fixed at $1,000,000$, and the data dimension $p$ is chosen as $500$ and $1000$. 

\begin{table}[h]
\scriptsize 
\caption{Comparison of different Parallrl ADMM algorithms for SVMs}
\setlength{\belowcaptionskip}{0.2cm} 
\renewcommand{\arraystretch}{1.5}
\resizebox{1\columnwidth}{!}{
\begin{tabular}{lcccccccc}
\hline
  & \multicolumn{2}{l}{DPADMM-G} & \multicolumn{2}{r}{$(n,p)=(200000,500)$} & \multicolumn{2}{l}{DPADMM-G} & \multicolumn{2}{r}{$(n,p)=(500000,1000)$}\\ 
\cmidrule(lr){2-5}\cmidrule(lr){6-9}
$K$ & CAR &  CT   &  NI   &  NTSF  & CAR &  CT   &  NI   &  NTSF    \\ \hline
 5  & \textbf{0.982(0.002)} & \textbf{2.230(0.258)} & \textbf{19.412(1.822)} & \textbf{10.1(0.001)} & \textbf{0.979(0.002)} & \textbf{4.678(0.571)} & \textbf{25.121(2.803)} & \textbf{10.4(0.002)}  \\
10  & \textbf{0.980(0.003)} & \textbf{1.510(0.179)} & \textbf{22.513(2.132)} & \textbf{10.2(0.001)} & \textbf{0.975(0.003)} & \textbf{2.855(0.325)} & \textbf{30.346(2.965)} & \textbf{10.6(0.005)} \\
20 & \textbf{0.968(0.005)} & \textbf{0.984(0.121)} & \textbf{25.347(2.602)} & \textbf{10.7(0.003)} & \textbf{0.972(0.006)} & \textbf{1.942(0.282)} & \textbf{40.145(3.291)} & \textbf{10.8(0.005)}  \\\hline
   & \multicolumn{2}{l}{QPADM-slack}&\multicolumn{2}{r}{$(n,p)=(200000,500)$} & \multicolumn{2}{l}{QPADM-slack} & \multicolumn{2}{r}{$(n,p)=(500000,1000)$} \\  \cmidrule(lr){2-5}\cmidrule(lr){6-9}
$K$  & CAR &  CT   &  NI   &  NTSF  & CAR &  CT   &  NI   &  NTSF  \\\hline
 5  & 0.975(0.005) & {2.300(0.300)} & 20.012(2.100) & 10.4(0.004) & 0.972(0.005) & 4.750(0.650) & 30.721(3.100) & 10.7(0.005)  \\
10  & 0.973(0.006) & 1.580(0.220) & 23.113(2.400) & 10.5(0.004) & 0.969(0.006) & 2.930(0.390) & 35.946(3.250) & 10.9(0.008) \\
20 & 0.961(0.008) & 1.050(0.150) & 25.947(2.900) & 11.0(0.006) & 0.965(0.009) & 2.010(0.340) & 44.745(3.600) & 11.1(0.008)  \\\hline
   & \multicolumn{2}{l}{QPADM-slack(GB) } & \multicolumn{2}{r}{$(n,p)=(200000,500)$} & \multicolumn{2}{l}{QPADM-slack(GB)} & \multicolumn{2}{r}{$(n,p)=(500000,1000)$}\\   \cmidrule(lr){2-5}\cmidrule(lr){6-9}
$K$  & CAR &  CT   &  NI   &  NTSF   & CAR &  CT   &  NI   &  NTSF  \\\hline
 5  & 0.978(0.004) & {2.270(0.285)} & 19.812(2.000) & 10.3(0.003) & 0.975(0.004) & 4.720(0.620) & 30.521(3.000) & 10.6(0.004)  \\
10  & 0.976(0.005) & 1.550(0.205) & 22.913(2.300) & 10.4(0.003) & 0.971(0.005) & 2.900(0.370) & 35.746(3.150) & 10.8(0.007) \\
20 & 0.964(0.007) & 1.020(0.140) & 25.747(2.800) & 10.9(0.005) & 0.968(0.008) & 1.980(0.320) & 44.545(3.500) & 11.0(0.007) \\\hline
 & \multicolumn{2}{l}{M-QPADM-slack(GB)  } & \multicolumn{2}{r}{$(n,p)=(200000,500)$} & \multicolumn{2}{l}{M-QPADM-slack(GB)} & \multicolumn{2}{r}{$(n,p)=(500000,1000)$}\\  \cmidrule(lr){2-5}\cmidrule(lr){6-9}
$K$  &  CAR &  CT   &  NI   &  NTSF  & CAR &  CT   &  NI   &  NTSF  \\\hline
 5  & 0.980(0.003) & {2.250(0.270)} & 19.612(1.900) & 10.2(0.002) & 0.977(0.003) & 4.700(0.600) & 30.321(2.900) & 10.5(0.003)  \\
10  & 0.978(0.004) & 1.530(0.190) & 22.713(2.200) & 10.3(0.002) & 0.973(0.004) & 2.880(0.350) & 35.546(3.050) & 10.7(0.006) \\
20 & 0.966(0.006) & 1.000(0.130) & 25.547(2.700) & 10.8(0.004) & 0.970(0.007) & 1.960(0.300) & 44.345(3.400) & 10.9(0.006)  \\\hline
\end{tabular}}
\label{tab7}
\end{table}

From the Table \ref{tab7}, we can find that DPADMM-G exhibits the best overall performance, boasting the highest CAR, relatively low CT, fewer NI, and a stable NTSF close to 10. As the value of $K$ increases, the CAR generally declines for all algorithms, while the CT mostly decreases and the NI increases. As \cite{Wu2025ParallelAA}   and  \cite{Wu2025AUC} stated, this undesirable phenomenon does not result from the inherent defects of the algorithm itself. Instead, it stems from the consensus constraints in constructing the parallel computing results. This consensus structure leads to poorer algorithm performance as the number of local machines increases. When the data scale increases from $(n,p) = (200000,500)$ to $(n,p) = (500000,1000)$, the CAR of all algorithms slightly decreases, and both the CT and NI increase significantly, indicating that larger data volumes pose challenges to classification and computational performance. Although the PADMM algorithm proposed in this paper tends to deteriorate at large scales, it is more robust and scalable compared to other algorithms.
In Figure \ref{fig001}, we present the convergence of the proposed DPADMM-G under different numbers of local machines when the data size is $(n,p) = (500000,1000)$. Apparently, the proposed algorithm converges rapidly with different numbers of local machines. However, as the number of local machines rises, the convergence speed tends to slow down, which aligns with the increase in NI as the number of machines increases in Table \ref{tab7}. 
\begin{figure}[h]
    \centering
    \includegraphics[width=0.8\linewidth]{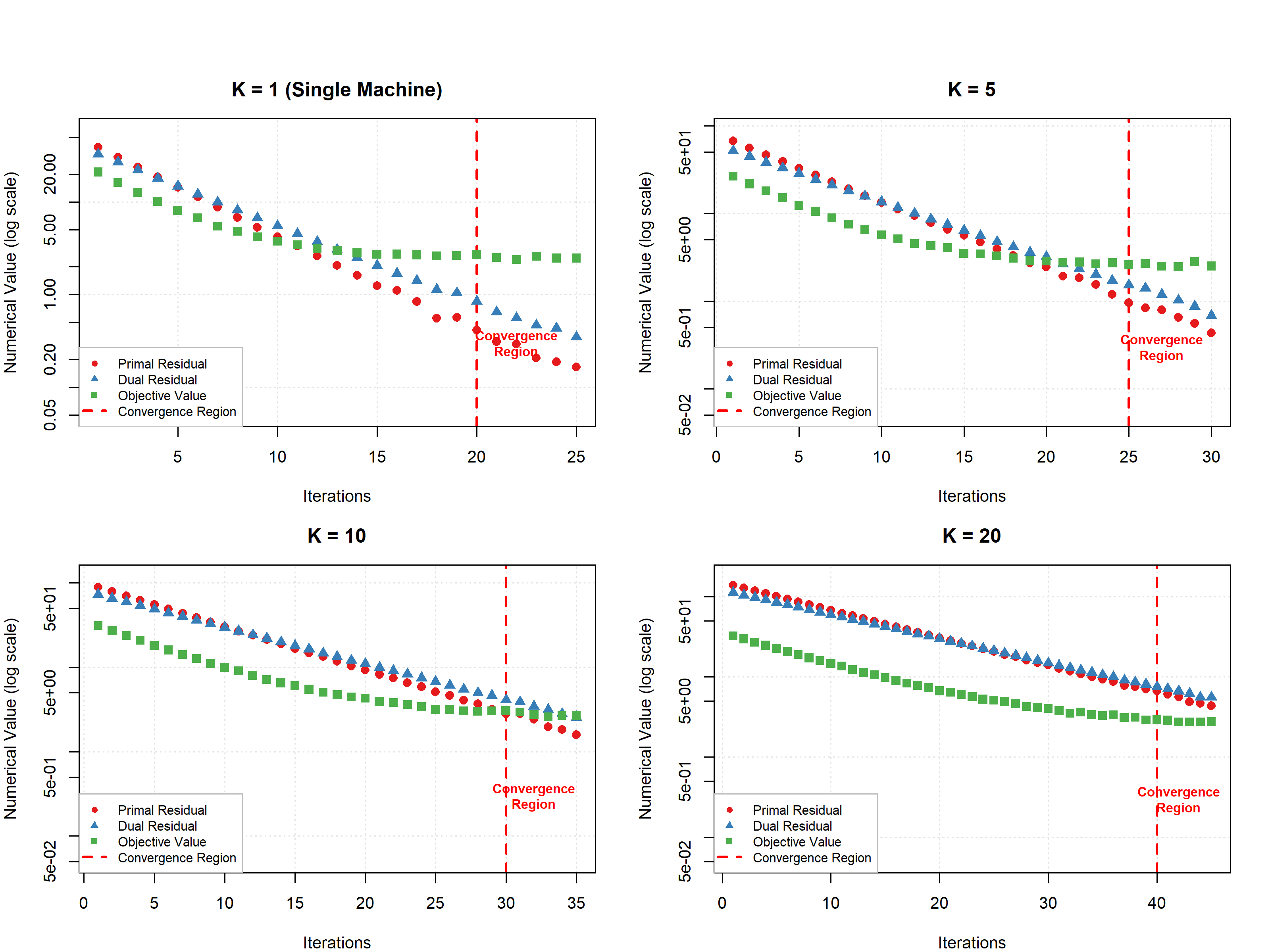}
    \caption{DPADMM-G convergence performance with different numbers of local machines.}
    \label{fig001}
\end{figure}

\begin{figure}[H]
    \centering
    \includegraphics[width=0.8\linewidth]{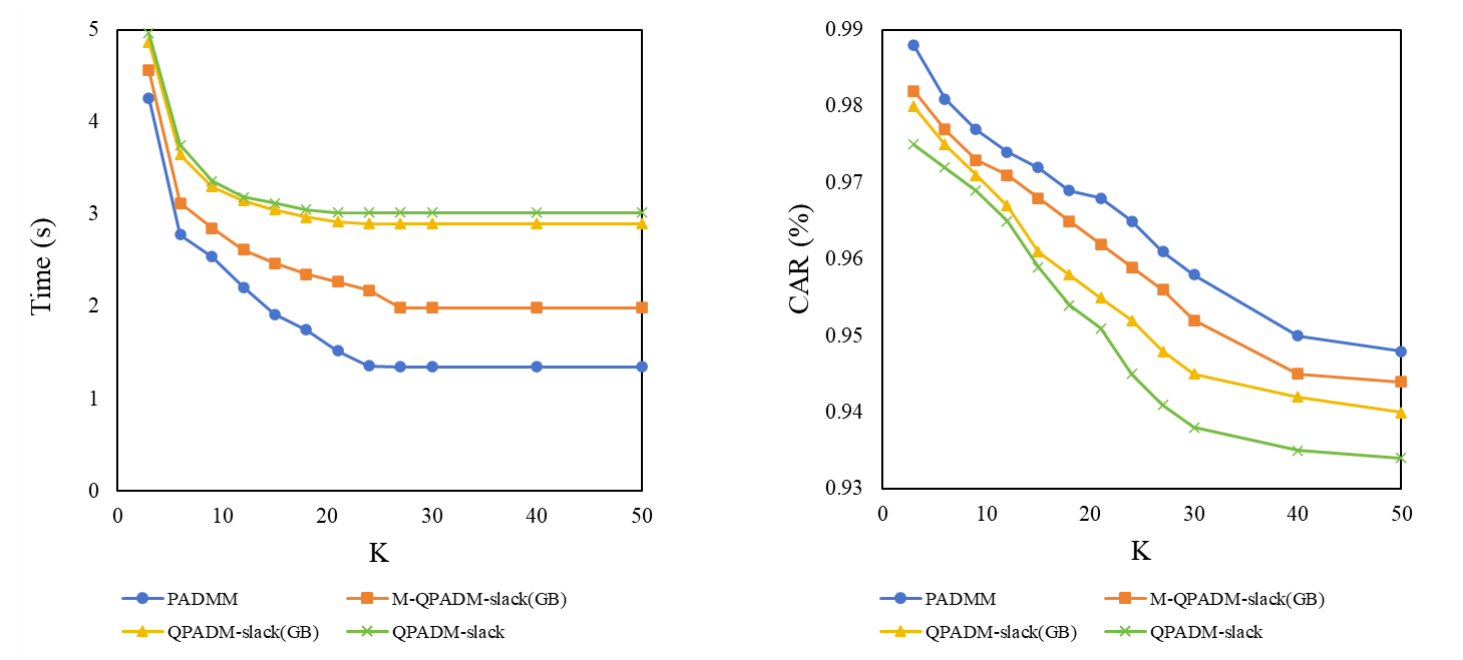}
    \caption{A schematic diagram depicting the variations of computation time and classification accuracy rate of four parallel algorithms with the number of local machines.}
    \label{fig:placeholder}
\end{figure}

In Table \ref{tab7}, only the cases with 5, 10 and 20 local machines are presented. The results of parallel computing with more local machines are shown in Figure \ref{fig:placeholder}. Figure \ref{fig:placeholder} reports the schematic diagrams of the changes in computation time and classification accuracy as $K$ increases. Regarding the computation time in Figure \ref{fig:placeholder}, when $K$ is greater than 20, the computation time tends to stabilize and no longer decreases. The reason for this is that the memory of the machine used in our experiment is limited, and it can only support the parallel operation of about 20 machines at a time. If there are more machines, due to insufficient memory, the acceleration effect cannot be achieved. As for CAR, it keeps decreasing as the number of machines increases. This is still due to the drawbacks of the consensus structure. When the number of local machines increases, the accuracy of the iterative solution will decline.
\subsection{Real Data}\label{sec42}
\vspace{-1em}
\quad \ To validate the value of the proposed algorithm in cross‑disciplinary scenarios, we apply it to the task of music genre classification, a problem that presents both computational challenges and musicological significance. Music audio features naturally exhibit a multi‑level structure: there are physical correlations among acoustic features, temporal smoothness, and semantic grouping . These structural characteristics align well with the proposed SS-SVM, providing an ideal platform to examine whether the algorithm can extract structured knowledge that aligns with musical cognition from complex data. We selected the publicly available benchmark dataset FMA (Free Music Archiv) in the field of music information retrieval as the verification platform. This selection was made after careful consideration for the following reasons: Firstly, the rich musical features provided by the FMA dataset inherently possess various structural information, which perfectly suits the application of the CR-SVMs model. Secondly, although the tracks in FMA may have multi-label attributes, through close collaboration with music domain experts, we determined an authoritative "primary genre" label for each piece of music, thus constructing a clear large-scale single-label music classification task. This task is not only large-scale enough to demonstrate the necessity of distributed computing, but its high-dimensional feature space and inherent structural characteristics also serve as an ideal testbed for examining whether the new algorithm can effectively perform sparse modeling and accurate classification in complex real-world scenarios. 

The FMA datase is a public music analysis dataset with various music features and metadata, which can be found in the GitHub repository (\url{https://github.com/mdeff/fma}) and comes in multiple versions. The small subset we commonly use, namely fma$\_$small, consists of 8 categories, with 1000 audio samples in each category, totaling 8000 audio samples. It should be noted that in the ``features.csv" of FMA, the feature names already include the feature types. We can group them by the prefixes of the feature names.  We group the 1036 features according to their physical meanings. Specifically,  we can divide the features into the following 7 groups: 16 time-domain features, including zcr\_mean, zcr\_var, rms\_mean, rmr\_var, etc.; 32 spectral features, including the mean and variance of spectral\_centroid, spectral\_bandwidth, spectral\_rolloff, spectral\_flatness, etc.; 160 MFCC features, including statistics such as the mean and variance of 20 MFCC coefficients; 96 chroma features, including statistics such as the mean and variance of 12 chroma features; 48 tonnetz features, including statistics such as the mean and variance of 6 tonnetz features; 518 echonest features, including a comprehensive feature set from the echonest audio analysis engine; and 166 other features, including rhythm features, beat features, and other audio features that are not classified into the above six main categories. 

We randomly selected two categories from the eight categories in fma$\_$small, with each category containing 1,000 samples. From each category, we randomly chose 800 samples as the training set and the remaining 200 samples as the test set. In the experiments, we primarily examined SGL-SVM, whose group‑sparse penalty enables feature selection at the group level, which helps produce more interpretable models. For comparison, we selected the SCAD‑SVM and SCAD‑Pinball SVM models, which are also suitable for high‑dimensional data, and solved them using the parallel algorithms proposed by \cite{Guan2020AnEP} and \cite{Wu2025ParallelAA}, respectively. Both were compared under the same distributed computing environment (with variable numbers of machines K). To better evaluate algorithm performance, we employed three evaluation metrics: “sparsity,” defined as the proportion of zero coefficients; “training accuracy,” which is the classification accuracy on the training set; and “test accuracy,” which is the classification accuracy on the test set. Additionally, we presented the results of running the algorithms in parallel on different numbers of local machines. Each experimental setting was independently simulated 100 times, and the average results are shown in Table \ref{tab43}.
\begin{table}[!ht]\footnotesize
    \centering
    \renewcommand{\arraystretch}{1.5}
    \caption{Comparative analysis of sparsity, training, and testing accuracies ($\%$) among PADMM, QPADM-slack and QPADM-slack(GB) algorithms}
    \begin{tabular}{llllllllll}
    \hline
          & \multicolumn{3}{c}{DPADMM-G} & \multicolumn{3}{c}{QPADM-slack(GB)} & \multicolumn{3}{c}{QPADM-slack}\\ 
        \cmidrule(lr){2 - 4}\cmidrule(lr){5 - 7}\cmidrule(lr){8 - 10}
        $K$ & Sparsity & Train & Test & Sparsity & Train & Test & Sparsity & Train & Test \\ \hline
        2 & \bf{90.38} & \bf{99.32} & \bf{97.27} & 85.99 & 95.14 & 92.96 & 83.64 & 93.38 & 92.00 \\ 
        4 & \bf{90.37} & \bf{99.26} & \bf{97.15} & 84.61 & 94.57 & 92.04 & 81.37 & 92.27 & 91.06 \\ 
        6 & \bf{90.36} & \bf{99.17} & \bf{97.08} & 83.60 & 93.89 & 91.48 & 80.14 & 91.78 & 90.60 \\ 
        8 & \bf{90.32} & \bf{99.04} & \bf{96.93} & 82.11 & 93.05 & 90.95 & 79.37 & 91.06 & 89.93 \\ 
        10 & \bf{90.13} & \bf{98.91} & \bf{96.84} & 81.37 & 92.58 & 90.09 & 77.99 & 90.78 & 89.09 \\ 
        12 & \bf{90.03} & \bf{98.79} & \bf{96.71} & 80.76 & 91.89 & 89.40 & 77.13 & 90.03 & 88.38 \\ 
        14 & \bf{89.87} & \bf{98.61} & \bf{96.59} & 79.37 & 91.17 & 88.66 & 76.43 & 89.74 & 87.84 \\ 
        16 & \bf{89.60} & \bf{98.50} & \bf{96.42} & 78.70 & 90.37 & 87.41 & 73.96 & 88.38 & 87.07 \\ 
        18 & \bf{89.67} & \bf{98.44} & \bf{96.38} & 77.66 & 89.80 & 86.29 & 72.66 & 87.79 & 86.26 \\ 
        20 & \bf{89.57} & \bf{98.34} & \bf{96.27} & 76.26 & 88.64 & 85.35 & 71.46 & 87.67 & 85.18 \\ \hline
    \end{tabular}
    \label{tab43}
\end{table}
Table \ref{tab43} presents the average performance of two algorithms under different parallelization scales. The results show that the DPADMM-G algorithm based on SGL-SVM consistently achieves significantly higher model sparsity while maintaining comparable or even superior classification accuracy. This phenomenon carries important practical implications: higher sparsity indicates that the model relies on fewer features for decision-making. More importantly, under the group-sparsity constraint, the retained features appear in "groups," which directly correspond to musical analysis dimensions such as "rhythm," "harmony," and "timbre." In contrast, the comparative approach based on SCAD-SVM, while also capable of achieving sparsity, performs feature selection in a fragmented manner, making it difficult to map the selected features to coherent musicological concepts.

Next, we report the influence of the previous seven pre-grouped sets on the accuracy of audio data classification. Regarding the coefficient estimation of SGL-SVM, the MFCC feature group has the largest number of non-zero estimated coefficients, followed by the spectral feature group, and then the chroma feature group. 
We have presented the sum of the absolute values of the estimated coefficients for each feature group and the schematic diagrams of the estimated coefficient distributions in Figures \ref{fig002} and \ref{fig004}.
In coefficient estimation, the number of non-zero coefficients also represents the importance of feature values. Based on music background knowledge, the MFCC feature group is the most important as it mimics human auditory perception and can effectively capture timbre and sound quality features. The spectral feature group comes next because it describes the spectral shape and energy distribution. The chroma feature group captures harmony and tonality information and also plays a relatively important role in music classification. Compared with these three groups, when the regularization penalty is relatively large, the other four feature groups have very few non-zero estimated coefficients, which can even be ignored. Therefore, when storage resources are limited, only these three feature groups can be collected to train a machine  learning model for audio classification. 
\begin{figure}[H]
    \centering
    \includegraphics[width=0.5\linewidth]{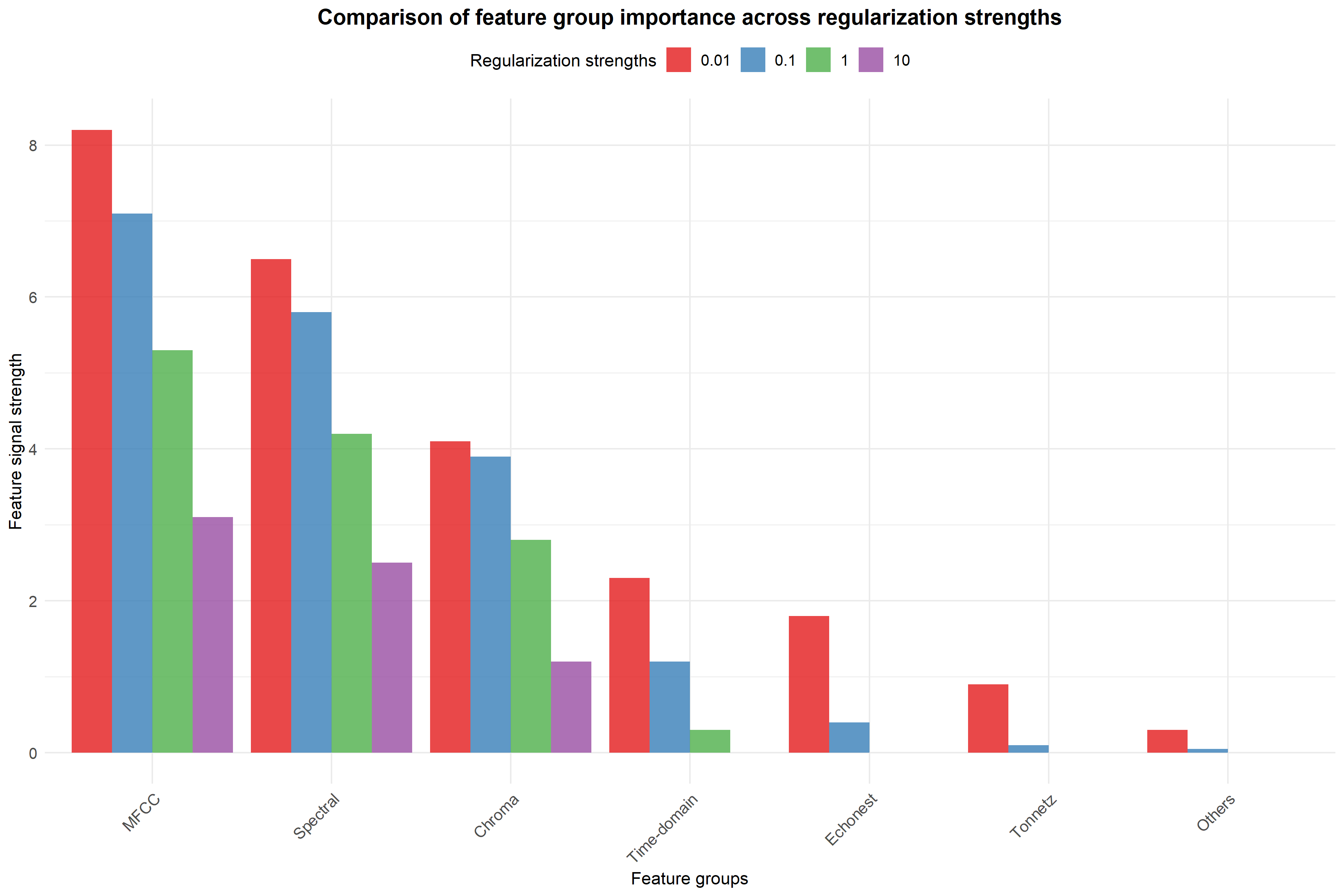}
    \caption{Group weight norms under different penalty parameters.}
    \label{fig002}
\end{figure}

The true value of a model extends beyond classification accuracy and lies in its ability to produce interpretable analysis results that align with music theory. Figure \ref{fig002} and \ref{fig004} show that the MFCC feature group contributes the largest weight, which is entirely consistent with research in music cognition science -- MFCCs simulate human auditory perception and are the core feature for perceiving timbre, which is the primary factor in distinguishing between different instruments and musical styles. The spectral features and chroma features follow closely, corresponding to "spectral brightness" and "harmonic content," respectively, both of which are key dimensions for describing musical style.
\begin{figure}[H]
    \centering
    \includegraphics[width=0.5\linewidth]{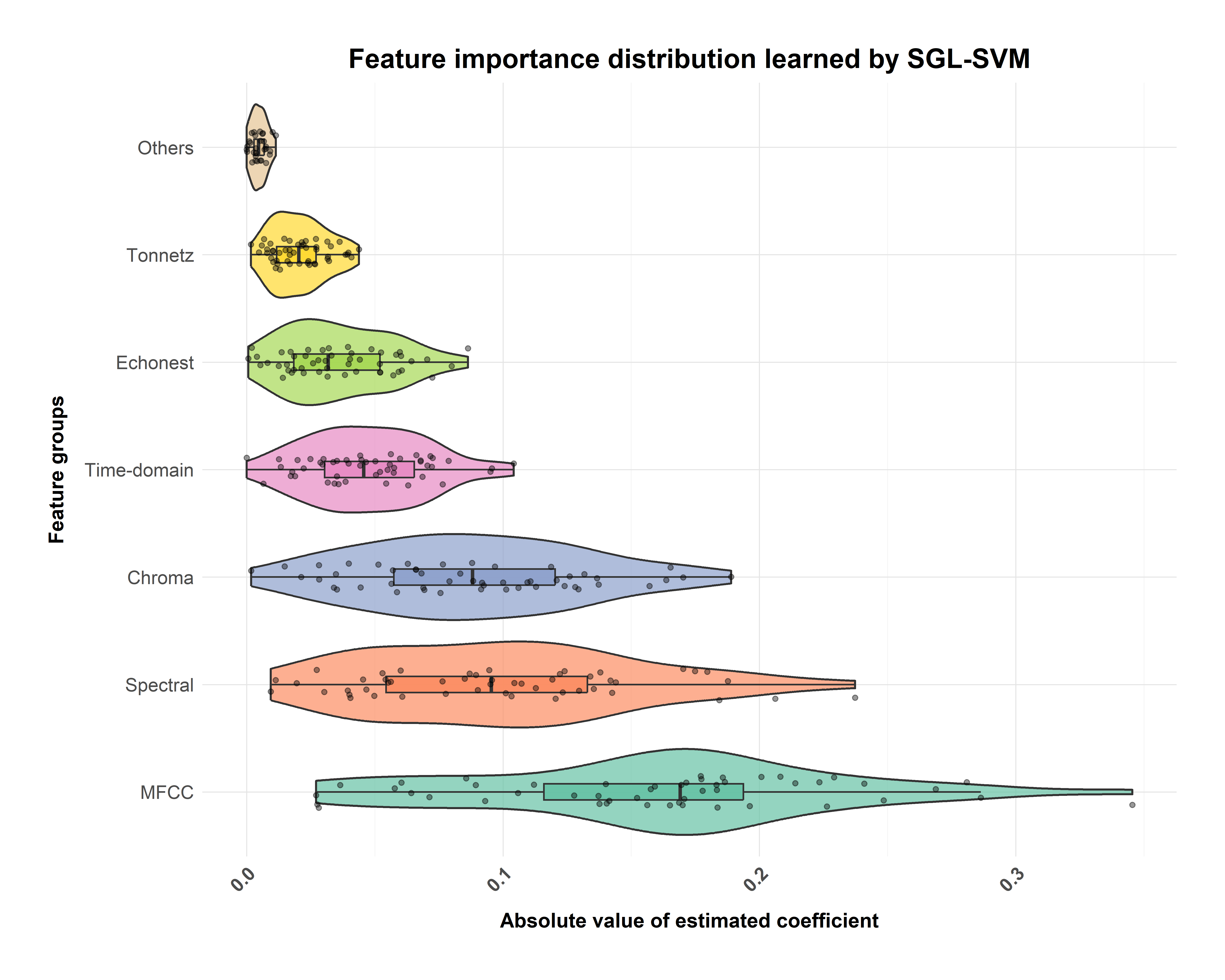}
    \caption{Feature group weight distribution.}
    \label{fig004}
\end{figure}
More critically, through structured sparse learning, the model automatically identifies these musically significant feature groups, while suppressing the weights of other feature groups, such as certain metadata features, to zero. This process essentially represents a "computational music analysis": the algorithm quantitatively confirms the foundational roles of dimensions such as "timbre $>$ harmony $>$ brightness" in genre discrimination from vast amounts of data, forming a quantitative echo of the qualitative descriptions found in traditional music textbooks.
\section{Conclusion}\label{sec5}
\quad \ This paper addresses the inadequacy of research on distributed parallel algorithms for SS-SVM by proposing a unified optimization framework based on convex and non-convex SVM problems. It derives a corresponding parallel ADMM algorithm and proves its global convergence with an improved sublinear rate. To ensure the comprehensiveness of the work, the paper also combines various loss functions discussed in \cite{Liang2024LinearizedAD} with a sparse group Lasso regularization term, resulting in a family of SGL-SVM as a significant complementary instance within the domain of SS-SVMs. Regardless of the regularization terms and SVM loss functions used, the computational complexity of our proposed algorithm remains consistent, which fully demonstrates its universality. Experiments on simulated data and the FMA dataset verify the reliability, stability, and scalability of the algorithm. Moreover, numerical experiments indicate that when training a machine learning model for music data classification, the MFCC feature group is the most crucial, followed by the spectral feature group and then the chroma feature group. 

Note that the algorithm proposed in this paper mainly focuses on convex loss functions. As for non-convex loss functions (refer to \cite{Wang2025SparseAR} and its references), we plan to solve them in future work. There are many other directions for further research in the future. At the theoretical level, future research will focus on enhancing the algorithm's efficiency and improving its theoretical foundation. Specifically, it includes investigating the algorithm's convergence properties under more challenging conditions such as, noisy or incomplete data and deriving more precise convergence rate estimates. Moreover, the algorithm's structure can be optimized and combined with other techniques, such as stochastic gradient descent or advanced acceleration methods \cite{Guo2020AnAF} to improve its convergence speed and overall performance. In terms of applications, we can extend the framework to multi-label tasks \cite{Zhang2021MultiTS}, such as music auto-tagging, to further test its broad effectiveness. 

\section*{CRediT authorship contribution statement}
Rongmei Liang: Writing–original draft, Software, Methodology, Data curation. Zizheng Liu: Writing–review and editing, Data curation, Funding acquisition. Xiaofei Wu: Validation, Supervision, Project administration.  Jingwen Tu: Funding acquisition, Visualization, Investigation.

\section*{Acknowledgements}
We are very grateful to Professor Bingsheng He for his valuable discussions with us, which greatly helps us use the  parallel ADMM algorithms to solve regression problems.  The research of  Liang and Wu was supported by the Scientific and Technological Research Program of Chongqing Municipal Education Commission [Grant Numbers KJQN202302003]. 
The research of Tu was supported by the General Program of Chongqing Natural Science Foundation [Grant Numbers CSTB2024NSCQ-MSX0077], the Doctoral Project of Chongqing Social Science Planning [Grant Numbers 2022BS064] and the Scientific and Technological Research Program of Chongqing Municipal Education Commission [Grant Numbers KJQN202301541].

\bibliographystyle{unsrt}
\bibliography{myrefq(abb)}

\end{document}